 \newcolumntype{P}[1]{>{\centering\arraybackslash}p{#1}}
\newtheorem{theorem}             {Theorem}
\newtheorem{lemma}      [theorem]{Lemma}
\newcommand{\gsemo}{Global SEMO\xspace}
\newcommand{\demo}{DEMO\xspace}
\newcommand{\popBased}{Diverse Population-Based EA\xspace}
\newcommand{\ab}{\hspace{0.125em}}                        
\newcommand{\ie}{\hbox{i.\ab e.}\xspace}                  
\newcommand{\ignore}[1]{}
\title{Parameterized Analysis of Multi-objective Evolutionary Algorithms and the Weighted Vertex Cover Problem}
\author{
Mojgan Pourhassan\\
mojgan.pourhassan@adelaide.edu.au\\
Optimisation and Logistics\\
School of Computer Science\\
The University of Adelaide\\
Adelaide, Australia
\and
 Feng Shi\\
 fengshi@csu.edu.cn\\
 School of Information Science and Engineering\\
 Central South University\\
 Changsha 410083, P.R. China
\and
 Frank Neumann\\
 frank.neumann@adelaide.edu.au\\
 Optimisation and Logistics\\
 School of Computer Science\\
 The University of Adelaide\\
 Adelaide, Australia
}
\begin{document}

\maketitle

\begin{abstract}
A rigorous runtime analysis of evolutionary multi-objective optimization for the classical vertex cover problem in the context of parameterized complexity analysis has been presented by Kratsch and Neumann~\cite{Frank2013VertexCover}. In this paper, we extend the analysis to the weighted vertex cover problem and provide a fixed parameter evolutionary algorithm with respect to $OPT$, where $OPT$ is the cost of the the optimal solution for the problem. Moreover, using a diversity mechanisms, we present a multi-objective evolutionary algorithm that finds a $2-$approximation in expected polynomial time and introduce a population-based evolutionary algorithm which finds a $(1+\varepsilon)-$approximation in expected time $O(n\cdot 2^{\min \{n,2(1- \varepsilon)OPT \}} + n^3)$.
\end{abstract}


\section{Introduction}
The area of runtime analysis has provided many rigorous new insights into the working behaviour of bio-inspired computing methods such as evolutionary algorithms and ant colony optimization~\cite{BookNeuWit,Auger11,ncs/Jansen13}. In recent years, the parameterized analysis of bio-inspired computing has gained additional interest~~\cite{Frank2013VertexCover,DBLP:conf/ppsn/KratschLNO10,Sutton2012makespan,DBLP:journals/ec/SuttonNN14}. Here the runtime of bio-inspired computing is studied in dependence of the input size and additional parameters such as the solution size and/or other structural parameters of the given input.

One of the classical problems that has been studied extensively in the area of runtime analysis is the classical NP-hard vertex cover problem. Here, an undirected graph is given and the goal is to find a minimum set of nodes $V'$ such that each edge has at least one endpoint in $V'$.
Friedrich et al.~\cite{Friedrich2007VertexCover} have shown that the single-objective evolutionary algorithm (1+1)~EA can not achieve a better than trivial approximation ratio in expected polynomial time. Furthermore, they have shown that a multi-objective approach using \gsemo gives a factor $O(\log n)$ approximation for the wider classes of set cover problems in expected polynomial time. Further investigations regarding the approximation behaviour of evolutionary algorithms for the vertex cover problem have been carried out in \cite{DBLP:journals/ec/FriedrichHHNW09,DBLP:journals/tec/OlivetoHY09}. Edge-based representations in connection with different fitness functions have been investigated in~\cite{DBLP:conf/foga/JansenOZ13,DBLP:conf/gecco/PourhassanGN15} according to their approximation behaviour in the static and dynamic setting.
Kratsch and Neumann~\cite{Frank2013VertexCover} have studied evolutionary algorithms and the vertex cover problem in the context of parameterized complexity. They have shown that \gsemo, with a problem specific mutation operator is a fixed parameter evolutionary algorithm for this problem and finds $2-$approximations in expected polynomial time. Kratsch and Neumann~\cite{Frank2013VertexCover} have also introduced an alternative mutation operator and have proved that \gsemo using this mutation operator finds a $(1+\varepsilon)-$approximation in expected time  $O(n^2\log n + OPT\cdot n^2+ n\cdot 4^{(1-\varepsilon)OPT})$.  Jansen et al.~\cite{DBLP:conf/foga/JansenOZ13} have shown that a $2$-approximation can also be obtained by using an edge-based representation in the (1+1)~EA combined with a fitness function formulation based on matchings.

To our knowledge all investigations so far in the area of runtime analysis consider the (unweighted) vertex cover problem. In this paper, we consider the weighted vertex cover problem where in addition weights on the nodes are given and the goal is to find a vertex cover of minimum weight.
We extend the investigations carried out in \cite{Frank2013VertexCover} to the weighted minimum vertex cover problem. In~\cite{Frank2013VertexCover}, multi-objective models in combination with a simple multi-objective evolutionary algorithm called \gsemo are investigated. One key argument for the results presented for the (unweighted) vertex cover problem is that the population size is always upper bounded by $n+1$. This argument does not hold in the weighted case. Therefore, we study how a variant of \gsemo using appropriate diversity mechanisms is able to deal with the weighted vertex cover problem.

Our focus is on finding good approximations of an optimal solution. We analyse the time complexity with respect to $n$, $W_{max}$, and $OPT$, which denote the number of vertices, the maximum weight in the input graph, and the cost of the optimal solution respectively.
We first study the expected time until \gsemo has found a $2$-approximation in dependence of $n$ and $OPT$. Afterwards, we analyse the expected time of finding a solution with expected approximation ratio $(1+\varepsilon)$ for this problem when \gsemo uses the alternative mutation operator.
Furthermore, we consider \demo, a variant of \gsemo, which incorporates $\varepsilon$-dominance~\cite{Laumanns2002EpsilonDominance} as diversity mechanism. We show that \demo finds a $2$-approximation in expected polynomial time. Finally, we present a population-based approach that obtains a solution that has expected approximation ratio $(1+\varepsilon)$ in expected time $O(n\cdot 2^{\min \{n,2(1- \varepsilon)OPT \}} + n^3)$.

The outline of the paper is as follows. In Section~\ref{sec:prel}, the problem definition is presented as well as the classical \gsemo algorithm and \demo algorithm. Runtime analysis for finding a $2-$approximation and a $(1+\varepsilon)-$approximation by \gsemo is presented in Section~\ref{sec:3}. Section~\ref{sec:4} includes the analysis that shows \demo can find $2-$approximations of the optimum in expected polynomial time. The population-based algorithm is defined and investigated for finding a  $(1+\varepsilon)-$approximation in Section~\ref{sec:EpsilonApproxDEMO}. At the end, in Section~\ref{sec:Conclusion} we summarize and conclude.

\section{Preliminaries}
\label{sec:prel}
We consider the weighted vertex cover problem defined as follows.
Given a graph $G=(V,E)$ with vertex set $V=\{v_1,\ldots, v_n\}$ and edge set $E=\{e_1,\ldots, e_m\}$, and a positive weight function $w: V \rightarrow \mathds{N}^+$ on the vertices, the goal  is to find a subset of nodes, $V_C \subseteq V$, that covers all edges and has minimum weight, \ie $\forall e \in E, e \cap V_C \neq \emptyset$ and $\sum_{v\in V_C} w(v)$ is minimized. We consider the standard node-based approach, i.e. the search space is $\{0,1\}^n$ and for a solution $x = (x_1, \ldots, x_n)$ the node $v_i$ is chosen iff $x_i=1$.

The weighted vertex cover problem has the following Integer Linear Programming (ILP) formulation.
\begin{eqnarray*}
&& min \sum_{i=1}^n w(v_i)\cdot x_i \\
st. && x_i+x_j\geq 1 \ \ \ \forall (i,j)\in E \\
&& x_i\in \{0,1\}
\end{eqnarray*}

By relaxing the constraint $x_i\in \{0,1\}$ to $x_i\in [0,1]$, the linear program formulation of Fractional Weighted Vertex Cover is obtained. Hochbaum~\cite{LPForWeighteVC1983} has shown that we can find a 2-approximation using the LP result of the relaxed weighted vertex cover. This can be done by including any vertex $v_i$ for which $x_i\geq \frac{1}{2}$.

We consider primarily multi-objective approaches for the weighted vertex cover problem. Given a multi-objective fitness function $f = (f_1, \ldots, f_d) \colon S \rightarrow \mathds{R^d}$ where all $d$ objectives should be minimized, we have $f(x) \leq f(y)$ iff $f_i(x) \leq f_i(y)$, $1\leq i \leq d$. We say that $x$ (weakly) dominates $y$ iff $f(x) \leq f(y)$. Furthermore, we say that $x$ (strongly) dominates $y$ iff $f(x) \leq f(y)$ and $f(x) \not = f(y)$.

We now introduce the objectives used in our multi-objective evolutionary algorithm.
Let $G(x)$ be the graph obtained from $G$ by removing all nodes chosen by $x$ and the corresponding covered edges. Formally, we have $G(x) = (V(x), E(x))$ where $V(x) = V \setminus \{v_i \mid x_i =1\}$ and $E(x) = E \setminus \{ e \mid e \cap (V \setminus V(x)) \not = \emptyset\}$.
Kratsch and Neumann~\cite{Frank2013VertexCover} investigated a multi-objective baseline algorithm called \gsemo using the LP-value for $G(x)$ as one of the fitness values for the (unweighted) minimum vertex cover problem.

\begin{algorithm}[t]
  \caption{\gsemo}
   \label{alg:GlobalSemo}
    Choose $x\in \{0,1\}^n$ uniformly at random\;
    Determine $f(x)$\;
    $P\leftarrow \{x\}$\;
    \Repeat {termination condition satisfied}
    {
    Choose $x\in P$ uniformly at random\;
    Create $x'$ by flipping each bit $x_i$ of $x$ with probability $1/n$\;
    Determine $f(x')$\;
    \If {$\nexists y\in P \ \vert \  f(y)\leq f(x')$}
    {$P\leftarrow \{x'\}$\;
    delete all other solutions $z \in P$ with $f(x')\leq f(z)$ from $P$\;
	}
	}
\end{algorithm}

\begin{algorithm}[t]
  \caption{Alternative Mutation Operator}
   \label{alg:AltMutation}
    Choose $b\in \{0,1\}$ uniformly at random\;
    \eIf {$(b=1)$}
	{
	\ForEach {$i \in \{1, \cdots, n\}$}
	{ \eIf { $\exists j\in \{1, \cdots, n\} \ | \ \{v_i,v_j\}\in E(x)$}
		{Flip $x_i$ with probability $1/2$\;}
    		{Flip $x_i$ with probability $1/n$\;}
    	}
    	}
    	{
    	\ForEach {$i \in \{1, \cdots n\}$ }
    		{
    		Flip $x_i$ with probability $1/n$\;
    		}
    }
\end{algorithm}

Our goal is to expand the analysis on behaviour of multi-objective  evolutionary algorithms to the Weighted Vertex Cover problem. In order to do this, we modify the fitness function that was used in \gsemo in~\cite{Frank2013VertexCover}, to match the weighted version of the problem. We investigate the multi-objective fitness function $f(x)=(Cost(x), LP(x))$, where
\begin{itemize}
\item $Cost(x) = \sum_{i=1}^n w(v_i) x_i$ is the sum of weights of selected vertices
\item $LP(x)$ is the value of optimal solution of the LP for $G(x)$.
\end{itemize}

We analyse \gsemo with this fitness function using the standard mutation operator flipping each bit with probability $1/n$. We also investigate \gsemo using the alternative mutation operator introduced in~\cite{Frank2013VertexCover}  (see Algorithm~\ref{alg:AltMutation}). By this mutation operator, the nodes that are adjacent to uncovered edges are included with probability $1/2$ in some steps.

In the fitness function used in \gsemo, both $Cost(x)$ and $LP(x)$ can be exponential with respect to the input size; therefore, we need to deal with exponentially large number of solutions, even if we only keep the Pareto front. One approach for dealing with this problem is using the concept of $\varepsilon -$dominance \cite{Laumanns2002EpsilonDominance}. The concept of $\varepsilon -$dominance has previously been proved to be useful for coping with exponentially large Pareto fronts in some problems~\cite{Frank2008EpsilonDominance, Frank2008EpsilonDominance2}. Having two objective vectors $u=(u_1, \cdots, u_m)$ and $v=(v_1, \cdots, v_m)$, $u$ $\varepsilon-$dominates $v$, denoted by $u \preceq_\varepsilon v$, if for all $i\in \{1, \cdots, m\}$ we have $(1+\varepsilon)u_i \leq v_i$. In this approach, the objective space is partitioned into a polynomial number of boxes in which all solutions $\varepsilon-$dominate each other, and at most one solution from each box is kept in the population.

Motivated by this approach, \demo (Diversity Evolutionary Multi-objective Optimizer) has been investigated in~\cite{Frank2008EpsilonDominance2, DBLP:conf/ppsn/NeumannR08}. In Section~\ref{sec:4}, we analyze \demo (see Algorithm~\ref{alg:DEMO}) in which only one non-dominated solution can be kept in the population for each box based on a predefined criteria. In our setting, among two solutions $x$ and $y$ from one box, $y$ is kept in $P$ and $x$ is discarded if $Cost(y) +2\cdot LP(y)\leq Cost(x) +2\cdot LP(x)$.

\begin{algorithm}[t]
  \caption{DEMO}
   \label{alg:DEMO}
    Choose $x\in \{0,1\}^n$ uniformly at random\;
    Determine $b(x)$\;
    $P\leftarrow \{x\}$\;
    \Repeat {termination condition satisfied}
	{
    Choose $x\in P$ uniformly at random\;
    Create $x'$ by flipping each bit $x_i$ of $x$ with probability $1/n$\;
    Determine $f(x')$ and $b(x')$\;
    \eIf {$\exists y\in P \ \vert \  (f(y)\leq f(x') \wedge f(y)\neq f(x')) \vee (b(y)=b(x') \wedge Cost(y) +2\cdot LP(y)\leq Cost(x') +2\cdot LP(x'))$}
	{
		Go to 4\;
	}
    {$P\leftarrow \{x'\}$\;
    delete all other solutions $z \in P$ where $f(x')\leq f(z) \vee b(z)=b(x')$ from $P$\;
	}
	}
\end{algorithm}

To implement the concept of $\varepsilon-$dominance in \demo, we use the parameter $\delta=\frac{1}{2n}$ and define the boxing function $b: \{0,1\}^n \rightarrow \mathbb{N}^2$ as:
\begin{eqnarray*}
b_1(x)&=&\lceil \log_{1+\delta}(1+Cost(x)) \rceil, \\
b_2(x)&=&\lceil \log_{1+\delta}(1+LP(x)) \rceil,
\end{eqnarray*}

\ignore{
\textbf{Note that two boxes $B$ and $B'$ with $b_1(B)= b_1(B')$ and $b_2(B)< b_2(B')$ (or $b_2(B)= b_2(B')$ and $b_1(B)< b_1(B')$) can include search points that do not dominate each other; therefore, we may keep solutions from different boxes with same  values of $b_1$ or $b_2$. But if $b_1(B)< b_1(B')$ and $b_2(B)< b_2(B')$, then all search points in $B$ dominate all search points in $B'$. Hence, we define dominance among boxes as: box $B$ dominates box $B'$, denoted by $B<B'$, if $b_1(B)< b_1(B')$ and $b_2(B)< b_2(B')$. }
}

Analysing the runtime of our evolutionary algorithms, we are interested in the expected number of rounds of the repeat loop until a solution of desired quality has been obtained. We call this the expected time until the considered algorithm has achieved its desired goal.

\section{Analysis of \gsemo}
\label{sec:3}
In this section we analyse the expected time of \gsemo to find good approximations for the weighted vertex cover problem in dependence of the input size and OPT.
Before we present our analysis for \gsemo, we state some basic properties of the solutions in our multi-objective model.
The following theorem shown by Balinski~\cite{balinski1970maximum} states that all basic feasible solutions (or extremal points) of the fractional vertex cover LP are half-integral.

\begin{theorem}
\label{thm:onehalf}
Each basic feasible solution $x$ of the relaxed Vertex Cover ILP is half-integral, i.e., $x \in \{0,1/2,1\}^n$. \cite{balinski1970maximum}
\end{theorem}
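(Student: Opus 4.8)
The plan is to show that any non-half-integral feasible point of the relaxed polytope can be written as a convex combination of two distinct feasible points, hence is not a vertex (basic feasible solution). This is the standard "averaging/perturbation" argument for half-integrality, and it only uses the bipartite-like structure of the single vertex-cover constraint matrix restricted to the fractional coordinates.

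\medskip
\noindent\textbf{Setup.} Let $x \in [0,1]^n$ be feasible for the LP, i.e.\ $x_i + x_j \ge 1$ for every $(i,j)\in E$ and $0\le x_i\le 1$ for all $i$. Suppose $x$ is not half-integral; I would let $F = \{\, i : x_i \notin \{0,\tfrac12,1\}\,\}$ be the set of "strictly fractional" coordinates, so $F\neq\emptyset$. Partition $F$ into $F^- = \{ i\in F : 0 < x_i < \tfrac12\}$ and $F^+ = \{ i\in F : \tfrac12 < x_i < 1\}$.

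\medskip
\noindent\textbf{The perturbation.} Choose $\varepsilon>0$ small enough that $x_i\pm\varepsilon$ stays in $(0,1)$ for every $i\in F$ and stays on the correct side of $\tfrac12$ (possible since $F$ is finite and each $x_i$, $i\in F$, is bounded away from $0,\tfrac12,1$). Define two points $x^{(+)}$ and $x^{(-)}$ that agree with $x$ outside $F$, and on $F$ set
\[
x^{(\pm)}_i \;=\; x_i \pm \varepsilon \quad (i\in F^+), \qquad
x^{(\pm)}_i \;=\; x_i \mp \varepsilon \quad (i\in F^-).
\]
Then $x = \tfrac12 x^{(+)} + \tfrac12 x^{(-)}$, and $x^{(+)}\neq x^{(-)}$ since $F\neq\emptyset$. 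It remains to check both are feasible, which is the crux.

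\medskip
\noindent\textbf{Feasibility — the main obstacle.} The box constraints $0\le x^{(\pm)}_i\le 1$ hold by the choice of $\varepsilon$. The work is in the edge constraints $x^{(\pm)}_i + x^{(\pm)}_j \ge 1$. I would argue by cases on an edge $(i,j)$. If neither endpoint is in $F$, the constraint is unchanged. If exactly one endpoint, say $i$, is in $F$: since $x_j\notin F$ we have $x_j\in\{0,\tfrac12,1\}$; if $x_j\ge \tfrac12$ then already $x^{(\pm)}_i + x_j \ge \varepsilon\text{-perturbation of }x_i \text{ plus } \tfrac12 \ge \tfrac12+\tfrac12$... more carefully: if $x_j=1$ the constraint is slack by at least $x_i-\varepsilon>0$; if $x_j=\tfrac12$ then $x_i>\tfrac12$ is forced by $x_i+x_j\ge1$ and $x_i\notin\{\tfrac12\}$, so $i\in F^+$ and the perturbation keeps $x^{(\pm)}_i>\tfrac12$, giving the constraint; if $x_j=0$ then $x_i\ge1$ forces $x_i=1\notin F$, contradiction, so this subcase is vacuous. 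The remaining case is both $i,j\in F$: here $x_i+x_j\ge1$ with both strictly fractional forces that they are \emph{not both} in $F^-$ (two values $<\tfrac12$ cannot sum to $\ge1$), so either both are in $F^+$ — then both perturbations move them in the same $\pm$ direction and $x^{(\pm)}_i+x^{(\pm)}_j = x_i+x_j\pm 2\varepsilon$, and the only dangerous direction is $-2\varepsilon$, but $x_i+x_j>1$ strictly (both $>\tfrac12$) so for small $\varepsilon$ it stays $\ge1$ — or one is in $F^+$ and one in $F^-$, in which case the $\pm\varepsilon$ contributions cancel and $x^{(\pm)}_i+x^{(\pm)}_j=x_i+x_j\ge1$ exactly. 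In every case both $x^{(+)}$ and $x^{(-)}$ are feasible, so $x$ is a proper convex combination of two distinct feasible points and therefore not a basic feasible solution. Contrapositively, every basic feasible solution is half-integral. The delicate bookkeeping is precisely the edge-constraint case analysis above, in particular ruling out the "both endpoints fractional and below $\tfrac12$" configuration and exploiting strict inequality $x_i+x_j>1$ when both are $>\tfrac12$; everything else is routine. Since this is a classical result of Balinski~\cite{balinski1970maximum}, I would likely just cite it and sketch the above rather than belabor the cases. $\qedsymbol$
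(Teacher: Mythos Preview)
Your argument is correct and is precisely the classical perturbation proof of half-integrality (often presented in the Nemhauser--Trotter style). The only cosmetic point is that when both endpoints of an edge lie in $F^+$, you appeal to ``for small $\varepsilon$ it stays $\ge 1$''; strictly speaking this requires shrinking $\varepsilon$ below $\tfrac12\min\{x_i+x_j-1 : (i,j)\in E,\ i,j\in F^+\}$, which is positive since the minimum ranges over finitely many strictly positive quantities. This is implicit in what you wrote and not a gap.

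As for comparison with the paper: the paper does not prove Theorem~\ref{thm:onehalf} at all. It merely states the result with a citation to Balinski~\cite{balinski1970maximum} and then uses it as a black box throughout (e.g.\ in Lemma~\ref{lem:GSEMOPopSize} and the proof of Theorem~\ref{thm:2ApprOPT}). So there is nothing to compare your approach against; you have supplied a self-contained proof where the paper chose to cite. Your closing remark that one ``would likely just cite it'' is therefore exactly what the authors did.
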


As a result, there always exists a half integral optimal LP solution for a vertex cover problem. In several parts of this paper, we make use of this result. We establish the following two lemmata which we will use later on in the analysis of our algorithms.

\begin{lemma}
\label{lem:maxLP}
For any $x\in \{0,1\}^n$, $LP(x)\leq LP(0^n) \leq OPT$.
\end{lemma}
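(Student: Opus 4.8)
The claim has two parts: $LP(x) \le LP(0^n)$ for every $x \in \{0,1\}^n$, and $LP(0^n) \le OPT$. I would prove them separately.

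For the second inequality, observe that $LP(0^n)$ is by definition the optimal value of the fractional vertex cover LP for the graph $G(0^n) = G$, since removing no nodes leaves the graph unchanged. Any integral optimal vertex cover $V_C$ of $G$ corresponds to a $0/1$ vector that is in particular a feasible solution of the relaxed LP, with LP-objective value exactly $OPT = \sum_{v \in V_C} w(v)$. Hence the optimum of the relaxation is at most $OPT$, i.e.\ $LP(0^n) \le OPT$.

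For the first inequality, the key point is that $G(x)$ is a subgraph of $G$ obtained by deleting the chosen vertices and their incident (now-covered) edges, so the LP for $G(x)$ has a subset of the constraints of the LP for $G$ (and, after discarding isolated vertices, fewer variables with nonnegative weights). I would argue: take an optimal fractional solution $z^*$ for $G = G(0^n)$; its restriction to the vertex set $V(x)$ is still feasible for the LP of $G(x)$ — every remaining edge $\{v_i,v_j\} \in E(x)$ has both endpoints in $V(x)$ and still satisfies $z^*_i + z^*_j \ge 1$ — and its objective value $\sum_{v_i \in V(x)} w(v_i) z^*_i$ is no larger than $\sum_{v_i \in V} w(v_i) z^*_i = LP(0^n)$ because all dropped terms are nonnegative. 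Therefore $LP(x) \le LP(0^n)$.

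I don't expect a genuine obstacle here; the only thing to be slightly careful about is the bookkeeping of which edges and vertices survive in $G(x)$ and to confirm that the objective of the smaller LP is obtained by simply dropping nonnegative terms rather than by any rescaling. One could alternatively phrase the first inequality via LP duality (a dual-feasible solution for $G$ restricts to one for $G(x)$), but the primal restriction argument above is the most direct and avoids introducing the dual.
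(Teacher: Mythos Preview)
Your proof is correct and follows essentially the same approach as the paper: restrict an optimal fractional cover of $G$ to $G(x)$ to get $LP(x)\le LP(0^n)$, and note that the LP relaxation lower-bounds the integral optimum for $LP(0^n)\le OPT$. Your write-up is in fact a bit more explicit than the paper's about why the restriction remains feasible and why the objective can only decrease.
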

\begin{proof}
Let $y$ be the LP solution of $LP(0^n)$. Also, for any solution $x$, let $G(x)$ be the graph obtained from $G$ by removing all vertices chosen by $x$ and their edges. The solution $0^n$ contains no vertices; therefore, $y$ is the optimal fractional vertex cover for all edges of the input graph. Thus, for any solution $x$, $y$ is a (possibly non-optimal) fractional cover for $G(x)$; therefore, $LP(x)\leq LP(0^n)$.
Moreover, we  have $LP(0^n) \leq OPT$ as $LP(0^n)$ is the optimal value of the LP relaxation.
\end{proof}

\begin{lemma}
\label{lem:remainingLP}
Let $x=\{x_1,\cdots,x_n\}, x_i\in \{0,1\}$ be a solution and $y=\{y_1,\cdots,y_n\}, y_i\in[0,1]$ be a fractional solution for $G(x)$. If there is a vertex $v_i$ where $y_i\geq \frac{1}{2}$, mutating $x_i$ from 0 to 1 results in a solution $x'$ for which $LP(x')\leq LP(x)-y_i\cdot w(v_i)\leq LP(x)-\frac{1}{2}w(v_i)$.
\end{lemma}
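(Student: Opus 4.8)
The plan is to exhibit an explicit feasible fractional cover for $G(x')$ that is cheaper than $y$ by the claimed amount, and then invoke optimality of $LP(x')$. First I would observe that $G(x')$ is obtained from $G(x)$ by deleting the single vertex $v_i$ together with all edges incident to it (since $x'$ agrees with $x$ except that $x'_i = 1$ and $x_i = 0$). Hence the restriction $y' = (y_j)_{v_j \in V(x')}$ of $y$ to the vertices surviving in $G(x')$ is still a feasible fractional vertex cover for $G(x')$: every edge of $G(x')$ is an edge of $G(x)$ not incident to $v_i$, so the constraint $y_a + y_b \ge 1$ it inherited from $y$ still involves only coordinates that remain in $y'$.

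Next I would compare costs. We have $\mathrm{cost}(y') = \mathrm{cost}(y) - y_i \cdot w(v_i)$, because deleting $v_i$ removes exactly the term $y_i w(v_i)$ from the weighted sum. Since $y$ was a feasible (not necessarily optimal) fractional cover for $G(x)$, we have $\mathrm{cost}(y) \ge LP(x)$; but in the intended application $y$ is the optimal LP solution for $G(x)$, so $\mathrm{cost}(y) = LP(x)$ — in either reading the bound $\mathrm{cost}(y') \le LP(x) - y_i w(v_i)$ holds. Because $y'$ is a feasible fractional cover for $G(x')$ and $LP(x')$ is the optimal (minimum) such value, $LP(x') \le \mathrm{cost}(y') \le LP(x) - y_i w(v_i)$. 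Finally, the hypothesis $y_i \ge \tfrac12$ together with $w(v_i) > 0$ gives $y_i w(v_i) \ge \tfrac12 w(v_i)$, yielding $LP(x') \le LP(x) - \tfrac12 w(v_i)$.

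There is no real obstacle here; the only point requiring a line of care is the claim that restricting $y$ to $V(x')$ keeps feasibility for $G(x')$, i.e. that no edge of $G(x')$ had $v_i$ as an endpoint — which is immediate from the definition $E(x') = E \setminus \{e \mid e \cap (V \setminus V(x')) \neq \emptyset\}$ since $v_i \notin V(x')$. I would also note explicitly that the lemma statement as phrased does not require $y$ to be optimal for $G(x)$, so strictly speaking one should either state the conclusion as $LP(x') \le \mathrm{cost}(y) - y_i w(v_i)$, or (as the paper evidently intends) apply it with $y$ the optimal LP solution so that $\mathrm{cost}(y) = LP(x)$; I would phrase the write-up to make this usage unambiguous.
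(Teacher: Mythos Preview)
Your proposal is correct and follows essentially the same argument as the paper: both construct the restricted fractional cover $y'$ (the paper zeros out the $i$-th coordinate, you drop it) and observe it is feasible for $G(x')$ with cost reduced by $y_i w(v_i)$, then invoke optimality of $LP(x')$. Your additional remark that the lemma tacitly assumes $y$ is an \emph{optimal} fractional solution for $G(x)$ (otherwise the conclusion should read $LP(x') \le \mathrm{cost}(y) - y_i w(v_i)$) is a valid observation that the paper glosses over---note though that your phrase ``in either reading the bound \ldots\ holds'' is a slip, since for non-optimal $y$ it does not; your final paragraph gets this right.
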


\begin{proof}
The graph $G(x')$ is the same as $G(x)$ excluding the edges connected to $v_i$. Therefore, the solution $y'=\{y_1,\cdots, y_{i-1}, 0, y_{i+1},y_n\}$ is a fractional vertex cover for $G(x')$ and has a cost of $LP(x)-y_i w(v_i)$. The cost of the optimal fractional vertex cover of $G(x')$ is at most as great as the cost of $y'$; thus $LP(x')\leq LP(x)-y_iw(v_i) \leq LP(x)-\frac{1}{2}w(v_i)$.
\end{proof}

\subsection{2-Approximation}
\label{sec:2ApproxGSEMO}
We now analyse the runtime behaviour of \gsemo (Algorithm~\ref{alg:GlobalSemo}) with the standard mutation operator, in dependence of OPT.
We start by giving an upper bound on the population size of \gsemo.

\begin{lemma}
\label{lem:GSEMOPopSize}
The population size of Algorithm~\ref{alg:GlobalSemo} is upper bounded by $2 \cdot OPT + 1$.
\end{lemma}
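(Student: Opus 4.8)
The plan is to bound the population size by bounding the number of distinct objective vectors $(Cost(x), LP(x))$ that can simultaneously be present in $P$. Since \gsemo keeps only mutually non-dominating solutions, any two solutions $x, y$ in $P$ with $Cost(x) = Cost(y)$ must have $LP(x) = LP(y)$ (otherwise one dominates the other), so the population is an antichain and its size is at most the number of distinct values of $Cost$ that can coexist. Hence it suffices to show that at most $2\cdot OPT + 1$ distinct $Cost$-values can appear among solutions that are pairwise non-dominating.

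The key observation is that the relevant solutions have $Cost$ values in a bounded range. A solution $x$ with $Cost(x) > OPT$ together with $LP(x) \geq 1$ is strongly dominated by any solution representing (or improving toward) the optimum; more carefully, I would argue that \gsemo never keeps a solution $x$ with $Cost(x) + LP(x) > 2\cdot OPT$ once it has any solution on the path toward optimum, but to get a clean population-size bound at all times one argues directly: if $x \in P$ then either $x$ is a feasible cover (so $LP(x) = 0$) or $LP(x) \geq 1$ (the LP value of a graph with at least one edge is at least $1$, by half-integrality each covered edge contributes at least $1/2+1/2$... more precisely at least $1$ per edge-component). For a non-dominated solution with $LP(x) \geq 1$, I claim $Cost(x) \leq 2\cdot OPT - 1$: otherwise the solution $0^n$ (which has $LP(0^n) \leq OPT$ by Lemma~\ref{lem:maxLP} and $Cost(0^n) = 0$) would already demonstrate... hmm, $0^n$ need not dominate it. The right comparison is with an optimal cover $x^*$: $Cost(x^*) = OPT$, $LP(x^*) = 0$, so $x^*$ strongly dominates any $x$ with $Cost(x) \geq OPT$ and $LP(x) > 0$ — but $x^*$ may not be in $P$.

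So the cleaner route, which I expect to be the main obstacle, is to avoid assuming an optimum is in $P$ and instead bound $Cost$-values of an antichain directly. I would partition $P$ into the (at most one) feasible solution with $LP = 0$, and the infeasible ones with $LP(x) \geq 1$. For the infeasible ones, note that the weight of an optimal cover of $G(x)$ is at most $OPT$ (restricting an optimal cover of $G$ to $V(x)$ covers $G(x)$), and this optimal cover of $G(x)$ has LP-value at least $LP(x)$, i.e. $OPT \geq LP(x) \geq$ (integer optimal of $G(x)$)$/2 \geq LP(x)$... I want: $Cost(x) + (\text{opt cover weight of } G(x)) \leq$ something. Actually $Cost(x) \leq OPT$ is false in general. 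Let me instead use: for $x \in P$, since $0^n$ has objective $(0, LP(0^n))$ and does not dominate $x$, we need $LP(x) < LP(0^n) \leq OPT$ OR $Cost(x) = 0$; combined with integrality-type lower bounds on $LP(x)$ this still doesn't cap $Cost(x)$. The genuinely correct argument (and the step I'd spend most effort on) is: \gsemo's acceptance rule implies every $x \in P$ satisfies $Cost(x) + 2\cdot LP(x) \le 2\cdot OPT$, because the initial search point and all accepted points stay below the "$2$-approximation line" once — no. I would instead prove by induction that $P$ only ever contains solutions $x$ with $Cost(x) \le 2\cdot OPT$: any solution with $Cost(x) > 2 OPT$ has $LP(x) < OPT$... and is dominated by a suitable chain element. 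I'd complete this by showing that whenever a high-$Cost$ solution is created, a lower-$Cost$ non-dominating one already present blocks it, giving at most $2OPT+1$ integer $Cost$-values in $\{0,1,\dots,2\cdot OPT\}$ and hence $|P| \le 2\cdot OPT + 1$.
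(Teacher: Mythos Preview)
Your approach has a genuine gap: you try to bound the number of distinct $Cost$-values in $P$, but $Cost(x)$ is not bounded by $2\cdot OPT$ (or any function of $OPT$) in general. The initial random search point can be $1^n$ with $Cost(1^n)=\sum_i w(v_i)$, which may be far larger than $2\cdot OPT$, and nothing prevents later mutations from producing and keeping solutions with arbitrarily large $Cost$ as long as their $LP$-value is strictly smaller than that of everything currently in $P$. So the inductive claim ``$P$ only ever contains $x$ with $Cost(x)\le 2\cdot OPT$'' is simply false, and the various dominance arguments you sketch (against $0^n$ or against $x^*$) fail for exactly the reason you noticed: neither need be in $P$.

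The paper's argument goes through the \emph{other} coordinate. By Theorem~\ref{thm:onehalf} the optimal LP solution for $G(x)$ is half-integral, and since all weights are positive integers, $LP(x)$ is always a non-negative multiple of $1/2$. By Lemma~\ref{lem:maxLP}, $LP(x)\le LP(0^n)\le OPT$ for every $x$. Hence $LP(x)\in\{0,\tfrac12,1,\ldots,LP(0^n)\}$, a set of size at most $2\cdot OPT+1$. Because $P$ is an antichain, at most one solution is kept per $LP$-value, and the bound follows immediately. The point you missed is that $LP(\cdot)$, unlike $Cost(\cdot)$, is \emph{uniformly} bounded by $OPT$ over the whole search space and takes values on a coarse grid; this is what makes the counting argument one line.
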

\begin{proof}
For any solution $x$ there exists an optimal fractional vertex cover which is half-integral (Theorem~\ref{thm:onehalf}). Moreover, we are assuming that all the weights are integer values. Therefore, $LP(x)$ can only take $2LP(0^n)+1$ different values, because $LP(0^n)$ is an upper bound on $LP(x)$ (Lemma~\ref{lem:maxLP}). For each value of $LP$, only one solution is in $P$, because Algorithm~\ref{alg:GlobalSemo} keeps non-dominated solutions only. Therefore, the population size of this algorithm is upper bounded by $2 \cdot LP(0^n) + 1$ which is at most $2 \cdot OPT + 1$ due to Lemma~\ref{lem:maxLP}.
\end{proof}

For our analysis, we first consider the expected time of \gsemo to reach a population which contains the empty set of nodes. Once included, such a solution will never be removed from the population as it is minimal with respect to the cost function.

\begin{lemma}
\label{lem:Solution0}
The search point $0^n$ is included in the population in expected time of $O\left( OPT \cdot n ( \log W_{max} + \log n)\right)$.
\end{lemma}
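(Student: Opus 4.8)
The plan is to follow a multiplicative drift argument on the potential $C_{\min}(P)=\min_{x\in P}Cost(x)$, the smallest cost present in the current population. Since all weights are positive integers, $C_{\min}(P)=0$ holds exactly when $0^n\in P$, so it suffices to bound the expected time until $C_{\min}$ reaches $0$. First I would record two simple invariants. (i) $C_{\min}(P)$ is non‑increasing in time: a newly created $x'$ is only ever inserted together with the deletion of solutions it weakly dominates, so the minimum cost in the population cannot go up; in particular, once $0^n$ appears it stays. (ii) If in a given step the (unique) cost‑minimal individual $x^\star\in P$ is selected and the offspring $x'$ satisfies $Cost(x')<C_{\min}(P)$, then $x'$ is accepted: no $y\in P$ can satisfy $f(y)\le f(x')$, because that would force $Cost(y)\le Cost(x')<C_{\min}(P)$; the (possibly large) value of $LP(x')$ plays no role. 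Hence such a step strictly decreases $C_{\min}$.

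The heart of the proof is a lower bound on the expected one‑step decrease of $C_{\min}$. Fix the current population $P$ with $C:=C_{\min}(P)>0$ and let $S=\{v_i:x^\star_i=1\}$, so $\sum_{v_i\in S}w(v_i)=C$ and $|S|\ge 1$. With probability $1/|P|$ the individual $x^\star$ is chosen for mutation. For each index $i$, the event ``exactly bit $i$ is flipped'' has probability $\tfrac1n(1-\tfrac1n)^{n-1}\ge\tfrac1{en}$, and these events are pairwise disjoint; on such an event with $v_i\in S$ the offspring has cost $C-w(v_i)<C$, is therefore accepted, and $C_{\min}$ drops by at least $w(v_i)$. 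Since the decrease is non‑negative on all other outcomes, summing over $i$ with $v_i\in S$ gives, conditioned on selecting $x^\star$, an expected decrease of at least $\tfrac1{en}\sum_{v_i\in S}w(v_i)=\tfrac{C}{en}$. Combining this with the selection probability $1/|P|$, the bound $|P|\le 2\cdot OPT+1$ from Lemma~\ref{lem:GSEMOPopSize}, and invariant (i), we obtain
\[
\expect{C_{\min}(P_{t+1})\mid P_t}\;\le\;C_{\min}(P_t)\left(1-\frac{1}{e\,n\,(2\cdot OPT+1)}\right).
\]

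This is precisely a multiplicative drift condition with rate $\delta=\bigl(e\,n\,(2\cdot OPT+1)\bigr)^{-1}$. The initial value of $C_{\min}$ is at most the cost of the random starting point, hence at most $\sum_i w(v_i)\le n\cdot W_{max}$, and the least positive value it can take is $1$ (integer weights). The multiplicative drift theorem then bounds the expected number of iterations until $C_{\min}=0$, i.e.\ until $0^n$ enters $P$, by $O\!\bigl(e\,n\,(2\cdot OPT+1)\cdot(1+\ln(n\cdot W_{max}))\bigr)=O\!\bigl(OPT\cdot n\,(\log W_{max}+\log n)\bigr)$, as claimed. I expect the only non‑routine step to be the expected‑decrease estimate: a naive argument that removes the selected vertices one at a time would lose an extra factor of $n$, and the point is that a single random one‑bit flip removes, in expectation, a $\tfrac1{en}$‑fraction of the \emph{entire} remaining weight $C$, which is exactly what upgrades an additive bound to the multiplicative‑drift bound with its logarithmic dependence on $n$ and $W_{max}$.
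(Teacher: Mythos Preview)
Your proposal is correct and follows essentially the same route as the paper's own proof: track the minimum cost in the population, use the population-size bound $|P|\le 2\cdot OPT+1$ from Lemma~\ref{lem:GSEMOPopSize}, argue that a single $1$-bit flip on the cost-minimizer yields expected progress $Cost(x_{\min})/(en)$, and finish with multiplicative drift using $s_0\le n\cdot W_{max}$ and $s_{\min}\ge 1$. Your write-up is in fact slightly more careful than the paper's, since you make the monotonicity of $C_{\min}$ and the acceptance of the cheaper offspring explicit; the paper treats these points implicitly.
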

\begin{proof}
From Lemma~\ref{lem:GSEMOPopSize} we know that the population contains at most $2\cdot OPT+1$ solutions. Therefore, at each step, there is a probability of $\frac{1}{2\cdot OPT+1}$ that the solution $x_{min}$ is selected where $Cost(x_{min})= \min_{x\in P} \ Cost(x)$.

If $Cost(x_{min}) > 0$, there must be $k\geq 1$ vertex such as $v_i$ in $x_{min}$ where $x_i=1$. Let $\Delta^t$ be the improvement that happens on the minimum cost in $P$ at step $t$. If all the 1-bits in solution $x_{min}$ flip to zero, at the same step or different steps, a solution $0^n$ will be obtained with $Cost(0^n)=0$, which implies that the expected improvement that flipping each 1-bit makes is $\Delta^t=\frac{Cost(x_{min})}{k}$ at each step $t$. Note that flipping 1-bits always improves the minimum cost and the new solution is added to the population. Moreover, flipping the 0-bits does not improve the minimum cost in the population and $x_{min}$ is not replaced with the new solution in that case.

At each step, with probability $\frac{1}{e}$ only one bit flips. With probability $\frac{k}{n}$, the flipping bit is a 1-bit, and makes an expected improvement of $\Delta^t=\frac{Cost(x_{min})}{k}$, and with probability $1-\frac{k}{n}$, a 0-bit is flipped with $\Delta^t=0$. We can conclude that the expected improvement of minimum cost, when only one bit of $x_{min}$ flips, is
$$\frac{k}{n}\cdot \frac{Cost(x_{min})}{k}= \frac{Cost(x_{min})}{n}$$

Moreover, the algorithm selects $x_{min}$ and flips only one bit with probability $\frac{1}{(2\cdot OPT+1) \cdot e}$; therefore, the expected improvement of minimum cost is
$$E[\Delta^t\mid x_{min}]\geq \frac{Cost(x_{min})}{(2\cdot OPT+1)\cdot e\cdot n}$$

The maximum value that $Cost(x_{min})$ can take is bounded by $W_{max} \cdot n$, and for any solution $x\neq 0^n$, the minimum value of $Cost(x)$ is at least 1.
Using  Multiplicative Drift Analysis~\cite{algorithmica/DoerrJW12} with $s_0\leq W_{max} \cdot n$ and $s_{min}\geq 1$, we can conclude that  in expected time $O\left( OPT \cdot n( \log W_{max} + \log n)\right)$ solution $0^n$ is included in the population.
\end{proof}

We now show that \gsemo is able to achieve a $2$-approximation efficiently as long as OPT is small.

\begin{theorem}
\label{thm:2ApprOPT}
The expected number of iterations of \gsemo until the population $P$ contains a two approximation is  $O(OPT \cdot n ( \log W_{max} + \log n))$.
\end{theorem}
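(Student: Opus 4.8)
The plan is to build on Lemma~\ref{lem:Solution0}, which already gives us that $0^n$ enters the population within the claimed time bound $O(OPT\cdot n(\log W_{max}+\log n))$. Once $0^n$ is in $P$, we argue that starting from $0^n$ a $2$-approximation can be reached, and that this second phase is dominated by the first. The key structural fact is Theorem~\ref{thm:onehalf} (half-integrality): there is a half-integral optimal LP solution $z$ for $G(0^n)=G$ with $\sum_i z_i w(v_i) = LP(0^n) \le OPT$. Rounding up all variables with $z_i \ge 1/2$ yields a genuine vertex cover whose cost is at most $2\,LP(0^n) \le 2\,OPT$; call this rounded solution $x^*$, and let $S = \{\,i : z_i \ge 1/2\,\}$ be the set of indices we want to flip to $1$.

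First I would show that, starting from $0^n$ (which is never removed, being minimal in $Cost$), \gsemo reaches some solution $x$ with $LP(x)=0$ — i.e. a vertex cover — whose cost is at most $2\,OPT$. The natural potential is $LP$ of the current ``best descendant'' of $0^n$. By Lemma~\ref{lem:remainingLP}, whenever the current solution $x$ has $LP(x)>0$ there is still a vertex $v_i$ outside $x$ with $z$-value (w.r.t. the optimal half-integral fractional cover of $G(x)$) at least $1/2$, and flipping that single bit decreases $LP$ by at least $\tfrac12 w(v_i)\ge \tfrac12$. Selecting $0^n$ (or whichever descendant is current) and flipping exactly that bit happens with probability $\Omega(1/(OPT\cdot n))$ using Lemma~\ref{lem:GSEMOPopSize} for the population bound and the $1/(en)$ cost of a targeted single-bit flip; moreover such a move is accepted since it strictly dominates in the $LP$ coordinate while $Cost$ stays controlled. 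Here one must be careful: I would actually track the progress along the specific set $S$, arguing that the algorithm can flip the bits in $S$ one at a time (in any order) off of the descendant of $0^n$, each flip being a single-bit mutation that reduces $LP$ and is accepted; after all of $S$ is flipped we have a vertex cover of cost $\le 2\,LP(0^n)\le 2\,OPT$. Since $|S|\le n$ and each targeted step costs expected $O(OPT\cdot n)$, this phase takes expected $O(OPT\cdot n^2)$ — but this would be too large, so the sharper argument is a drift argument on $LP$ exactly mirroring Lemma~\ref{lem:Solution0}: the expected multiplicative decrease of $LP$ per step is $\Omega(1/(OPT\cdot n))$ (targeted single-bit flip reducing $LP$ by at least a $1/(2\,LP(x))$-fraction in the worst case, but more carefully by the drift of flipping each available good bit), $LP$ starts at most $OPT$ and is at least $1$ before hitting $0$, so multiplicative drift gives expected $O(OPT\cdot n\log OPT)$, which is absorbed into $O(OPT\cdot n(\log W_{max}+\log n))$.

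The cleanest route, and the one I would write up, avoids re-doing drift: observe that once $0^n\in P$, a vertex cover of cost $\le 2\,LP(0^n)$ can be produced by flipping the $\le n$ bits of $S$ in an arbitrary order, and by the standard argument each such bit flip occurs as an accepted targeted single-bit step with probability $\Omega(1/((OPT+1)\cdot en))$; but rather than waiting for a fixed order, I would use multiplicative drift on the potential $LP$ of the current best descendant of $0^n$, exactly as in the proof of Lemma~\ref{lem:Solution0} with $w(v)$ replaced by the analogous per-vertex LP contributions, concluding a bound of the same order $O(OPT\cdot n(\log W_{max}+\log n))$. Adding the two phases and noting the solution produced has $LP=0$ (so it is a vertex cover) and $Cost \le 2\,LP(0^n) \le 2\,OPT$, we get a $2$-approximation in the population in expected time $O(OPT\cdot n(\log W_{max}+\log n))$.

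The main obstacle is the second phase: one has to make sure that (i) the $LP$-reducing single-bit flips are always \emph{accepted}, i.e. that including a vertex with large fractional value does not get blocked by a dominating population member — this needs the observation that reducing $LP$ while keeping $Cost$ bounded by $2\,LP(0^n)$ stays non-dominated, or more simply that we track a single lineage descending from $0^n$ and each flip strictly improves $LP$; and (ii) that the cost never exceeds $2\,OPT$ along the way, which follows because the total weight of $S$ is $\sum_{i\in S} w(v_i) \le 2\sum_i z_i w(v_i) = 2\,LP(0^n)\le 2\,OPT$, so every prefix of flips keeps $Cost \le 2\,OPT$. Reconciling this with the fact that intermediate descendants need not be minimal in $Cost$ (and hence could in principle be removed from $P$) is the delicate point; I expect to handle it by arguing that whenever such a descendant is removed, it is removed by a solution that weakly dominates it, hence with no greater $LP$ and no greater $Cost$, so the drift potential does not increase and the argument goes through.
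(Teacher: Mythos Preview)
Your overall plan---first reach $0^n$ via Lemma~\ref{lem:Solution0}, then run a multiplicative drift on $LP$---is exactly the paper's plan. The gap is that your two ingredients for the second phase do not fit together. You justify the bound $Cost\le 2\,OPT$ by restricting to the fixed set $S$ determined by a half-integral optimum $z$ of the \emph{original} LP, whereas your multiplicative drift on $LP(x)$ requires, at each current $x$, flipping a vertex with value $\ge 1/2$ in an optimal LP solution for $G(x)$; those vertices need not lie in $S$. Conversely, if you insist on flipping only bits of $S$, Lemma~\ref{lem:remainingLP} no longer delivers $LP(x')\le LP(x)-z_i w(v_i)$ (its proof uses that the fractional solution is optimal for $G(x)$, not merely the restriction of the global $z$), so the multiplicative drift on $LP$ breaks and you are back at the $O(OPT\cdot n^2)$ bound you already rejected. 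Your replacement argument (a dominating $y$ has $Cost(y)\le Cost(x)$ and $LP(y)\le LP(x)$) is correct, but such a $y$ may select vertices outside $S$, after which neither your $S$-based cost bound nor an $S$-based progress measure survives the next flip.

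The paper closes this with a single invariant that does both jobs at once: track the solution $x\in P$ minimising $LP(x)$ subject to $Cost(x)+2\,LP(x)\le 2\,OPT$. This holds for $0^n$, and whenever it holds for $x$ and one flips some $v_i$ with $y_i\ge 1/2$ in an optimal half-integral LP for $G(x)$, the resulting $x'$ satisfies
\[
Cost(x')+2\,LP(x')\ \le\ Cost(x)+w(v_i)+2\bigl(LP(x)-\tfrac12 w(v_i)\bigr)\ =\ Cost(x)+2\,LP(x)\ \le\ 2\,OPT,
\]
so precisely the drift-relevant flips preserve the constraint. Any dominating replacement also satisfies it (same inequality), hence the potential never increases; and when it reaches $LP(x)=0$ the constraint reads $Cost(x)\le 2\,OPT$ directly. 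The expected one-step decrease is then at least $LP(x)/\bigl(en(2\,OPT+1)\bigr)$, and multiplicative drift from $s_0\le OPT$ down to $s_{\min}\ge 1$ yields $O(OPT\cdot n\log OPT)$ for the second phase. Replacing your side condition ``$Cost\le 2\,OPT$'' by ``$Cost+2\,LP\le 2\,OPT$'' is the one idea your write-up is missing.
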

\begin{proof}
Let $x$ be a solution that minimizes $LP(x)$ under the constraint that $Cost(x)+2\cdot LP(x)\leq 2\cdot OPT$. Note that this constraint holds for solution $0^n$ since $LP(0^n)\leq OPT$, and according to Lemma~\ref{lem:Solution0}, solution $0^n$ exists in the population in expected time of $O\left( OPT \cdot n ( \log W_{max} + \log n)\right)$.

If $LP(x)=0$, then all edges are covered and $x$ is a 2-approximate vertex cover, because we have $Cost(x)+2\cdot LP(x)\leq 2\cdot OPT$ as the constraint. Otherwise, some edges are uncovered and any LP solution of $G(x)$ assigns at least $\frac{1}{2}$ to at least one vertex of any uncovered edge. Let $y=\{y_1,\cdots,y_n\}$ be a basic LP solution for $G(x)$. According to Theorem~\ref{thm:onehalf}, $y$ is a half-integral solution.

Let $\Delta^t$ be the improvement that happens on the minimum $LP$ value among solutions that fulfil the constraint at time step $t$. Also, let $k$ be the number of nodes that are assigned at least $\frac{1}{2}$ by $y$. Flipping only one of these nodes by the algorithm happens with probability at least $\frac{k}{e\cdot n}$. According to Lemma~\ref{lem:remainingLP}, flipping one of these nodes, $v_i$, results in a solution $x'$ with $LP(x')\leq LP(x)-\frac{1}{2}w(v_i)$. Observe that the constraint of $Cost(x')+2\cdot LP(x') \leq 2\cdot OPT$ holds for solution $x'$. Therefore, $\Delta^t\geq y_i\cdot w(v_i)$, which is in expectation at least $\frac{LP(x)}{k}$ due to definition of $LP(x)$.
Moreover, at each step, the probability that $x$ is selected and only one of the $k$ bits defined above flips is $\frac{k}{(2\cdot OPT+1)\cdot e\cdot n}$. As a result we have:
$$E[\Delta^t\mid x]\geq \frac{k}{ (2\cdot OPT+1)\cdot e \cdot n}\cdot \frac{LP(x)}{k}= \frac{LP(x)}{en (2\cdot OPT+1)}$$

According to Lemma~\ref{lem:maxLP} for any solution $x$, we have $LP(x)\leq OPT$. We also know that for any solution $x$ which is not a complete cover, $LP(x)\geq 1$, because the weights are positive integers. Using the method of Multiplicative Drift Analysis~\cite{algorithmica/DoerrJW12} with $s_0\leq OPT$ and $s_{min}\geq 1$, in expected time of $O(OPT \cdot n \log OPT)$ a solution $y$ with $LP(y)=0$ and $Cost(y)+2LP(y)\leq 2OPT$ is obtained which is a 2-approximate vertex cover. Overall, since we have $OPT\leq W_{max}\cdot n$, the expected time of finding this solution is  $O(OPT \cdot n ( \log W_{max} + \log n))$.
\end{proof}

\subsection{Improved Approximations by Alternative Mutation}
\label{sec:EpsilonApproxGSEMO}
In this section, we analyse the expected time of \gsemo with alternative mutation operator to find a (1+$\varepsilon$)-approximation.

\begin{lemma}
\label{lem:Algorithm4containsimportantbitstring}
A solution $x$ fulfilling the two properties

\begin{enumerate}
\item  $LP(x) = LP(0^n) - Cost(x)$ and
\item  there is an optimal solution of the LP for G(x) which assigns 1/2 to each non-isolated vertex of $G(x)$
\end{enumerate}
 is included in the population of \gsemo in expected time $O(OPT \cdot n ( \log W_{max} + \log n+OPT))$.
\end{lemma}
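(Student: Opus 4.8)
The plan is to run the argument in two phases, mirroring the structure of Theorem~\ref{thm:2ApprOPT}. In the first phase I would wait for the all-zeros string $0^n$ to enter the population: by Lemma~\ref{lem:Solution0} this takes expected time $O(OPT\cdot n(\log W_{max}+\log n))$ (the drift argument there only uses that $1$-bits flip with probability $1/n$, which holds in both branches of Algorithm~\ref{alg:AltMutation}), and once present $0^n$ never leaves, being the unique minimum-cost point. Note that $0^n$ already satisfies property~1 trivially, and $LP(0^n)\le OPT$ by Lemma~\ref{lem:maxLP}, so it serves as the starting point of the second phase.

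Call a solution $x$ \emph{tight} if it satisfies property~1. I would first establish two structural facts: (a) for every $x$, assigning value $1$ to the vertices chosen by $x$ and extending an optimal LP solution of $G(x)$ yields a feasible fractional cover of $G$, so $Cost(x)+LP(x)\ge LP(0^n)$, whence tightness is equivalent to $Cost(x)+LP(x)\le LP(0^n)$; and (b) assigning $1/2$ to every non-isolated vertex of $G(x)$ is a feasible fractional cover of $G(x)$, so property~2 is equivalent to $LP(x)=\frac12\sum_{v\text{ non-isolated in }G(x)}w(v)$. Combining these, if a tight $x$ violates property~2 take a basic — hence half-integral, by Theorem~\ref{thm:onehalf} — optimal LP solution $y$ of $G(x)$; isolated vertices carry value $0$ in $y$, so the total weight carried by $y$ equals $LP(x)<\frac12\sum_{v\text{ non-isolated}}w(v)$, which forces some non-isolated vertex of $G(x)$ with $y$-value $0$, whose $G(x)$-neighbour $v_j$ must then satisfy $y_j=1$. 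Flipping coordinate $j$ (currently $0$) of $x$ produces $x'$ with $LP(x')\le LP(x)-w(v_j)$ by Lemma~\ref{lem:remainingLP}; combining with (a) and the tightness of $x$ forces $LP(x')=LP(x)-w(v_j)$, so $x'$ is again tight with $LP$-value smaller by at least $1$, since weights are positive integers.

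For the second phase I would track $L^t=\min\{LP(x):x\in P^t,\ x\text{ tight}\}$, the minimum over the tight solutions in the population at iteration $t$. It is well defined once $0^n$ is present, lies in $[0,OPT]$ by Lemma~\ref{lem:maxLP}, and is non-increasing along the run (any newly accepted solution that evicts the current $L^t$-minimiser is, by fact (a), itself tight with no larger $LP$-value). Let $\tau$ be the first iteration at which the tight solution attaining $L^t$ fulfils property~2, noting that $L^t=0$ already forces property~2 vacuously. For $t<\tau$, with $x$ the current $L^t$-minimiser and $y,v_j$ as above, the event that \gsemo selects $x$ (probability $\ge 1/(2OPT+1)$, since Lemma~\ref{lem:GSEMOPopSize} is independent of the mutation operator), runs the standard branch of Algorithm~\ref{alg:AltMutation} (probability $1/2$), and flips exactly one of the $k\ge1$ value-$1$ vertices of $y$ while keeping the other $n-1$ bits fixed yields a tight $x'$ with $LP(x')=L^t-w(v)$ for the flipped vertex $v$, hence $L^{t+1}\le L^t-w(v)$; summing $w(v)$ over the $k$ possible choices, each occurring with probability at least $\frac{1}{2en(2OPT+1)k}$, gives $\expect{L^t-L^{t+1}}\ge \frac{1}{2en(2OPT+1)}\sum_{i:y_i=1}w(v_i)\ge \frac{1}{2en(2OPT+1)}$ for $t<\tau$. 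An additive drift argument on $L^t$ then bounds the expected length of the second phase by $OPT\cdot 2en(2OPT+1)=O(n\cdot OPT^2)$, and at time $\tau$ the population contains a solution satisfying both properties; summing the two phases yields the claimed $O(OPT\cdot n(\log W_{max}+\log n+OPT))$.

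I expect the main obstacle to be fact (b) together with the half-integrality argument: one must show that a tight solution violating property~2 \emph{always} has a value-$1$ vertex available in some half-integral optimal LP solution, so that a tightness-preserving flip is possible, and one must notice that flipping a value-$\frac12$ vertex can destroy tightness. This asymmetry is precisely why only additive, rather than multiplicative, drift is available here, which accounts for the extra additive $n\cdot OPT^2$ term compared with Theorem~\ref{thm:2ApprOPT}. A secondary point to verify is that Lemmas~\ref{lem:GSEMOPopSize} and~\ref{lem:Solution0}, although stated for Algorithm~\ref{alg:GlobalSemo} with the standard operator, carry over unchanged up to constants to the alternative operator of Algorithm~\ref{alg:AltMutation}.
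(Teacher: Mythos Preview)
Your proposal is correct and follows essentially the same two-phase approach as the paper: first wait for $0^n$ (which is tight), then repeatedly flip a value-$1$ vertex of a half-integral LP optimum to produce a new tight solution with strictly smaller $LP$-value, until property~2 holds. The paper phrases the second phase via fitness-based partitions (at most $O(OPT)$ levels, each reached in $O(OPT\cdot n)$ expected steps), whereas you use additive drift on $L^t$; both yield the $O(n\cdot OPT^2)$ term. Your facts (a) and (b) and the value-$0$/neighbour argument actually make explicit a step the paper leaves implicit, namely \emph{why} a half-integral optimum that is not the all-$\tfrac12$ assignment must contain a value-$1$ vertex.

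Two minor points to clean up. First, the single-bit-flip events for the $k$ value-$1$ vertices each occur with probability at least $\frac{1}{2en(2\,OPT+1)}$, not $\frac{1}{2en(2\,OPT+1)k}$; your stated final drift bound is nevertheless correct. Second, the parenthetical ``holds in both branches of Algorithm~\ref{alg:AltMutation}'' is not accurate for the $b=1$ branch (bits adjacent to uncovered edges flip with probability $1/2$ there, which can spoil the ``exactly one bit flips'' event); the clean justification---and the one the paper uses---is simply that the $b=0$ branch executes standard mutation with probability $1/2$, which costs only a constant factor in Lemma~\ref{lem:Solution0}.
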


\begin{proof}
As the standard mutation occurs with probability $1/2$ in the alternative mutation operator, the search point $0^n$ which satisfies property 1 is included in the population in expected time of $O( OPT \cdot n ( \log W_{max} + \log n))$ using the argument presented in the proof of Lemma~\ref{lem:Solution0}. Let $P' \subseteq P$ be a set of solutions such that for each solution $x \in P'$, $LP(x) + Cost(x) = LP(0^n)$. Let $x_{min} \in P'$ be a solution such that $LP(x_{min}) = min_{x \in P'} LP(x)$.

If the optimal fractional vertex cover for $G(x_{min})$ assigns 1/2 to each non-isolated vertex of $G(x_{min})$, then the conditions of the lemma hold. Otherwise, it assigns 1 to some non-isolated vertex, say $v$. The probability that the algorithm selects $x_{min}$ and flips the bit corresponding to $v$, is $\Omega (\frac{1}{OPT \cdot n})$, because the population size is $O(OPT)$ (Lemma~\ref{lem:GSEMOPopSize}). Let $x_{new}$ be the new solution. We have $Cost(x_{new})= Cost(x_{min})+w(v)$, and by Lemma~\ref{lem:remainingLP}, $LP_w(x_{new}) \leq LP_w(x_{min}) - w(v)$. This implies that $LP(x_{new}) + Cost(x_{new}) = LP(0^n)$; hence, $x_{new}$ is a Pareto Optimal solution and is added to the population $P$.

Since $LP_w(x_{min}) \leq OPT$ (Lemma~\ref{lem:maxLP}) and the weights are at least 1, assuming that we already have the solution $0^n$ in the population, by means of the method of fitness based partitions, we find the expected time of finding a solution that fulfils the properties given above as $O(OPT^2 \cdot n)$. Since the search point $0^n$ is included in expected time $O( OPT \cdot n ( \log W_{max} + \log n))$, the expected time that a solution fulfilling the properties given above is included in $P$ is $O(OPT \cdot n ( \log W_{max} + \log n +OPT))$.
\end{proof}

We now present the main approximation result for \gsemo using the alternative mutation operator.
\begin{theorem}
\label{thm:Epsilon}
The expected time until \gsemo has obtained a solution that  has expected approximation ratio $(1+ \varepsilon)$ is $O(OPT \cdot 2^{\min \{n,2(1- \varepsilon)OPT \}} + OPT \cdot n ( \log W_{max} + \log n+OPT))$.
\end{theorem}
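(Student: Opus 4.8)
The plan is to build on Lemma~\ref{lem:Algorithm4containsimportantbitstring}, which guarantees that within expected time $O(OPT \cdot n(\log W_{max} + \log n + OPT))$ the population contains a solution $x$ with $LP(x) = LP(0^n) - Cost(x)$ and an optimal LP solution for $G(x)$ that assigns $1/2$ to every non-isolated vertex of $G(x)$. Starting from such an $x$, the idea is that the non-isolated vertices of $G(x)$ form a subgraph whose fractional optimum is exactly $\frac{1}{2}\sum_{v \text{ non-isolated}} w(v) = LP(x)$, and that an optimal \emph{integral} vertex cover of $G(x)$ — call its weight $C^\ast$ — satisfies $Cost(x) + C^\ast \le OPT$ (since concatenating $x$ with an optimal cover of $G(x)$ yields a feasible cover of $G$, and conversely one cannot do better than $OPT$ overall). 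The alternative mutation operator, in its $b=1$ branch, flips each vertex adjacent to an uncovered edge independently with probability $1/2$, so in a single mutation step it can transform $x$ into $x \cup S$ for \emph{any} subset $S$ of the non-isolated vertices of $G(x)$, each such outcome having probability at least $2^{-t}$ where $t$ is the number of non-isolated vertices of $G(x)$.

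The key steps, in order, are: (i) argue $x$ is reached in the stated time by citing Lemma~\ref{lem:Algorithm4containsimportantbitstring}; (ii) let $V'$ be the set of non-isolated vertices of $G(x)$, let $t = |V'|$, and let $S^\ast \subseteq V'$ be an optimal integral vertex cover of $G(x)$, with weight $C^\ast = \sum_{v \in S^\ast} w(v)$; since the half-integral all-$1/2$ solution is LP-optimal, $LP(x) = \frac{1}{2}\sum_{v\in V'} w(v)$, and therefore $C^\ast \le \sum_{v\in V'} w(v) = 2\,LP(x)$, while also $C^\ast \ge LP(x)$; (iii) observe that selecting $x$ from the population (probability $\Omega(1/OPT)$ by Lemma~\ref{lem:GSEMOPopSize}), choosing the $b=1$ branch (probability $1/2$), and flipping exactly the bits of $S^\ast$ among the non-isolated vertices while flipping no other bit happens with probability $\Omega\!\left(\frac{1}{OPT}\cdot 2^{-t}\cdot(1-1/n)^{n}\right) = \Omega\!\left(\frac{1}{OPT\cdot 2^{t}}\right)$; the resulting solution $x'= x \cup S^\ast$ is a complete vertex cover with $Cost(x') = Cost(x) + C^\ast \le OPT + C^\ast \le OPT + 2\,LP(x)$; (iv) bound $t$: on one hand $t \le n$ trivially; on the other, each non-isolated vertex has weight $\ge 1$, so $t \le \sum_{v\in V'} w(v) = 2\,LP(x) \le 2\,OPT$, and more carefully, since $Cost(x)+2LP(x) = Cost(x) + \sum_{v\in V'}w(v)$ and one wants the $2(1-\varepsilon)OPT$ bound, I would instead track the ``expected approximation ratio'' directly; (v) conclude that in expected time $O(OPT\cdot 2^{t})= O(OPT\cdot 2^{\min\{n,\,2(1-\varepsilon)OPT\}})$ a complete cover $x'$ of cost at most $Cost(x) + 2LP(x)$ is produced, and a calibration argument shows this has expected approximation ratio $(1+\varepsilon)$.

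The subtle point — and the step I expect to be the main obstacle — is getting the exponent to be $2(1-\varepsilon)OPT$ rather than $2\,OPT$, and reconciling this with the ``expected approximation ratio $(1+\varepsilon)$'' phrasing rather than a worst-case ratio. The resolution, following Kratsch–Neumann~\cite{Frank2013VertexCover}, is a case distinction on $t$: if the number $t$ of non-isolated vertices of $G(x)$ is small, namely $t \le 2(1-\varepsilon)OPT$, then the $2^t$ waiting-time bound is already $O(2^{2(1-\varepsilon)OPT})$ and the solution $x \cup S^\ast$ found is an exact optimum of $G(x)$, giving overall cost $\le OPT$, hence ratio $1 \le 1+\varepsilon$; if instead $t > 2(1-\varepsilon)OPT$, then because $\sum_{v\in V'}w(v) \ge t > 2(1-\varepsilon)OPT$ we get $LP(x) > (1-\varepsilon)OPT$, so $Cost(x) = LP(0^n) - LP(x) \le OPT - (1-\varepsilon)OPT = \varepsilon\, OPT$; in this regime we do \emph{not} wait for the exact cover of $G(x)$ but instead note that the already-available $2$-approximation machinery (Theorem~\ref{thm:2ApprOPT}, whose solution has cost $\le 2\,OPT$) combined with the small value of $Cost(x)$, or a direct argument that any complete cover extending $x$ within the population has cost at most $Cost(x) + 2LP(x) \le \varepsilon OPT + 2LP(0^n)$... — here one has to be careful, and the cleanest route is: the expected approximation ratio is computed by averaging, so contributions where the exponential event has not yet occurred are charged against the $2$-approximation bound with the right weighting. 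I would therefore structure the proof as: reach $x$ (time from Lemma~\ref{lem:Algorithm4containsimportantbitstring}); if $\min\{n, 2(1-\varepsilon)OPT\} = 2(1-\varepsilon)OPT$ is the binding bound, wait $O(OPT\cdot 2^{t})$ steps with $t \le 2(1-\varepsilon)OPT$ for the optimal extension; otherwise $t\le n$ and wait $O(OPT \cdot 2^n)$ steps; then verify the ratio bound by the two-regime argument above, and add the times.

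\medskip

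\noindent\emph{Remark on presentation.} Throughout I would keep the multiplicative-drift / fitness-level estimates at the level of ``probability $\Omega(1/(OPT\cdot 2^t))$ per step, hence expected $O(OPT\cdot 2^t)$ steps'' without re-deriving the drift theorem, and I would import the ``expected approximation ratio'' bookkeeping verbatim from the analogous Theorem in~\cite{Frank2013VertexCover}, adapting only the weight-dependent quantities ($Cost$, $LP$, $W_{max}$) in place of their unweighted counterparts.
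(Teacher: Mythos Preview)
Your approach has a genuine gap at exactly the point you flag as ``the main obstacle'': your case distinction on $t$ does not close.

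When you insist on flipping \emph{exactly} the bits of $S^\ast$ and none of the other non-isolated bits, you must control all $t=|V'|$ coin flips, so the success probability is $2^{-t}$ with $t\le 2\,LP(x)\le 2\,OPT$, not $2(1-\varepsilon)OPT$. Your proposed remedy for the case $t>2(1-\varepsilon)OPT$ does not work: from $Cost(x)\le \varepsilon\,OPT$ you still have no cheap way to obtain a cover of cost $\le(1+\varepsilon)OPT$. Taking all of $V'$ yields $Cost(x)+2LP(x)\le 2\,OPT$, which is merely the $2$-approximation; waiting for $S^\ast$ exceeds the time budget; and the ``averaging'' you allude to has no event to average over, since the single mutation you target either happens or it does not. (Incidentally, your justification that $Cost(x)+C^\ast\le OPT$ is backwards---concatenation gives the opposite inequality; the correct reason is LP persistency: every vertex added to $x$ had LP value~$1$, hence lies in some optimal integral cover.)

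The paper's proof avoids this by controlling only \emph{part} of the non-isolated bits. It splits the optimal cover $S$ of $G(x)$ into $S_1$ (the $\rho=\min\{|S|,(1-\varepsilon)OPT'\}$ lightest vertices) and $S_2$, and splits $T=V'\setminus S$ into $T_1$ (the $\eta=\min\{|T|,(1-\varepsilon)OPT'\}$ heaviest) and $T_2$. The mutation event demands only that all of $S_1$ flip to~$1$ and all of $T_1$ stay~$0$; the $S_2$ and $T_2$ bits are left to their fair coins. This costs $2^{-(\rho+\eta)}$ with $\rho+\eta\le 2(1-\varepsilon)OPT$ by construction---no case distinction needed. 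The price is that the resulting $x'$ is random, and one must show $E[Cost(x')]+2\,LP(x')\le(1+\varepsilon)OPT$; this is precisely what ``expected approximation ratio'' means here, and the paper establishes it via $E[Cost(S')] \ge (1-\varepsilon)OPT' + E[Cost(T')]$, using the weight-ordering of $S_1$ and $T_1$. Your plan to ``import the bookkeeping verbatim'' from~\cite{Frank2013VertexCover} would in fact work, but that bookkeeping \emph{is} this partial-control-plus-expectation argument, not the all-or-nothing targeting of $S^\ast$ that you describe.
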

\begin{proof}
By Lemma~\ref{lem:Algorithm4containsimportantbitstring}, a solution $x$ that satisfies the two properties given in Lemma~\ref{lem:Algorithm4containsimportantbitstring} is included in the population in expected time of $O(OPT \cdot n ( \log W_{max} + \log n +OPT))$. For a set of nodes, $X'$, we define $Cost(X')=\sum_{v\in X'} w(v)$.
Let $X$ be the vertex set of graph $G(x)$. Also, let $S \subseteq X$ be a vertex cover of $G(x)$ with the minimum weight over all vertex covers of $G(x)$, and $T$ be the set containing all non-isolated vertices in $X \setminus S$. Note that all vertices in $X \setminus (S \cup T)$ are isolated vertices in $G(x)$. Due to property 2 of Lemma~\ref{lem:Algorithm4containsimportantbitstring}, $\frac{1}{2} Cost(S) + \frac{1}{2} Cost(T) = LP(x) \leq Cost(S)$; therefore, $Cost(T) \leq Cost(S)$. Let $OPT' = OPT - Cost(x)$. Observe that $OPT' = Cost(S)$.

Let $s_1,\ldots,s_{|S|}$ be a numbering of the vertices in $S$ such that $w(s_i) \leq w(s_{i+1})$, for all $1 \leq i \leq |S|-1$. And let $t_1,\ldots,t_{|T|}$ be a numbering of the vertices in $T$ such that $w(t_i) \geq w(t_{i+1})$, for all $1 \leq i \leq |T|-1$.
Let $S_1 = \{s_1,s_2,\ldots,s_{\rho}\}$, where $\rho = min \{|S|, (1- \varepsilon) \cdot OPT'\}$, and $T_1 = \{t_{1}, t_2, \ldots, t_{\eta}\}$, where $\eta = min \{|T|, (1- \varepsilon) \cdot OPT'\}$.

With probability $\Omega (\frac{1}{OPT})$, the algorithm \gsemo selects the solution $x$, and sets  $b=1$ in the Alternative Mutation Operator.
With $b=1$, the probability that the bits corresponding to all vertices of $S_1$ are flipped, is $\Omega((\frac{1}{2})^{\rho})$, and the probability that none of the bits corresponding to the vertices of $T_1$ are flipped is $\Omega((\frac{1}{2})^{\eta})$. Also, the bits corresponding to the isolated vertices of $G(x)$ are flipped with probability $\frac{1}{n}$ by the Alternative Mutation Operator; hence, the probability that none of them flips is $\Omega (1)$. As a result, with probability $\Omega(\frac{1}{OPT} \cdot (\frac{1}{2})^{\rho + \eta})$, solution $x$ is selected, the vertices of $S_1$ are included, and the vertices of $T_1$ and isolated vertices are not included in the new solution $x'$. Since $\rho+\eta \leq 2(1- \varepsilon) \cdot OPT' \leq 2(1- \varepsilon) \cdot OPT$, and also $\rho+\eta \leq n$; the expected time until solution $x'$ is found after reaching solution $x$, is $O(OPT \cdot 2^{\min \{n,2(1- \varepsilon)OPT \}})$.

Note that the bits corresponding to vertices of $S_2 = S \setminus S_1$ and $T_2 = T \setminus T_1$, are arbitrarily flipped in solution $x'$ with probability $1/2$ by the Alternative Mutation Operator. Here we show that for the expected cost and the LP value of $x'$, the following constraint holds: $E[Cost(x')]+ 2 \cdot LP(x')\leq (1+\varepsilon)\cdot OPT$.

Let $S' \subseteq S$ and $T' \subseteq T$ denote the subset of vertices of $S$ and $T$  that are actually included in the new solution $x'$ respectively. In the following, we show that for the expected values of $Cost(S')$ and $Cost(T')$, we have:
\begin{equation}
\label{inequality1}
E \left[Cost(S')\right] \geq (1-\varepsilon) \cdot OPT' + E \left[Cost(T')\right]
\end{equation}

Since the bits corresponding to the vertices of $S_2$ and $T_2$ are flipped with probability 1/2, for the expected values of $Cost(S')$ and $Cost(T')$ we have:
\begin{eqnarray*}
E\left[Cost( S')\right] &=& Cost( S_1) + \frac{Cost( S_2)}{2} \\
&=& Cost( S_1) + \frac{Cost( S) - Cost( S_1)}{2}\\
 &=& 1/2Cost( S) + 1/2 Cost(S_1)
\end{eqnarray*}
and
\begin{eqnarray*}
E\left[Cost( T')\right] &= &1/2 Cost(T_2)
\end{eqnarray*}

If $\rho = |S|$, then $S_1 = S$ and $Cost( S_1) = Cost(S) = OPT'$. If $\rho = (1-\varepsilon) \cdot OPT'$, we have $Cost(S_1) \geq (1-\varepsilon) \cdot OPT'$, since each vertex has a weight of at least 1. 
Using $Cost( S) = OPT'$ and the inequality above, we have
\begin{eqnarray*}
E\left[Cost( S')\right] &\geq & (1-\varepsilon) \cdot OPT' + \frac{\varepsilon \cdot OPT'}{2}
\end{eqnarray*}
We divide the analysis into two cases based on the relation between $\eta$ and $|T|$.

Case (I). $\eta = |T|$. Then $T_2  = T' = \emptyset$. Thus, $E\left[Cost( T')\right] = 0$ and Inequality (\ref{inequality1}) holds true.

Case (II). $\eta = (1-\varepsilon) \cdot OPT' < |T|$. Since $w(t_i) \geq w(t_{i+1})$ for $1 \leq i \leq |T|-1$ and $Cost( T) \leq Cost(S) = OPT'$, we have
\begin{eqnarray*}
Cost( T_2) &\leq& \frac{|T| - \eta}{|T|} Cost( T) \\
&\leq& \frac{OPT' - (1-\varepsilon) \cdot OPT'}{OPT'} Cost( T)\\
&\leq& \varepsilon Cost( S) = \varepsilon \cdot OPT'
\end{eqnarray*}
Thus for the expected value of $Cost( T')$, we have
$$E\left[Cost( T')\right] = \frac{1}{2} Cost( T_2) \leq \frac{\varepsilon \cdot OPT'}{2}$$

Summarizing above analysis, we can get that the Inequality~\ref{inequality1} holds. In the following, using Inequality (\ref{inequality1}), we prove that, on expectation, the new solution $x'$ satisfies the inequality $Cost(x') + 2 \cdot LP(x') \leq (1+\varepsilon) \cdot OPT$. 
$$E\left[Cost(x')\right]+ 2 \cdot LP(x')$$
$$= Cost(x) + E\left[Cost( S')\right] + E\left[Cost( T')\right] + 2 \cdot LP(x')$$
$$\leq Cost(x) + E\left[Cost( S')\right] + E\left[Cost( S')\right] - (1-\varepsilon) \cdot OPT' + 2 \cdot LP(x')$$
$$\leq Cost(x) + 2 E\left[Cost( S')\right] - (1-\varepsilon) \cdot OPT' + 2 \cdot (OPT' - E\left[ Cost( S')\right])$$
$$= Cost(x) + (1+\varepsilon) \cdot OPT' = Cost(x) + (1+\varepsilon) \cdot (OPT - Cost(x)) $$
$$\leq  (1+\varepsilon) \cdot OPT.$$

Now we analyze whether the new solution $x'$ could be included in the population $P$. If $x'$ could not be included in $P$, then there is a solution $x''$ dominating $x$, i.e., $LP(x'') \leq LP(x')$ and $Cost(x'') \leq Cost(x')$. This implies $Cost(x'') + 2 \cdot LP(x'') < Cost(x') + 2 \cdot LP(x') \leq (1+\varepsilon) \cdot OPT$.
Therefore, after having a solution that fulfils the properties of Lemma~\ref{lem:Algorithm4containsimportantbitstring} in $P$, in expected time $O(OPT \cdot 2^{\min \{n,2(1- \varepsilon)OPT \}})$, the population would contain a solution $y$ such that $Cost(y) + 2 \cdot LP(y) \leq (1+\varepsilon) \cdot OPT$. 

Let $P'$ contain all solutions $x\in P$ such that $Cost(x) + 2 \cdot LP(x) \leq (1+\varepsilon) \cdot OPT$, and let $x_{min}$ be the one that minimizes $LP$. With similar proof as we saw in Theorem~\ref{thm:2ApprOPT} it is possible to show that at each step, $LP(x_{min})$ improves by $\frac{LP(x)}{en (2\cdot OPT+1)}$ in expectation. Using Multiplicative Drift Analysis, we get the expected time $O(OPT\cdot n\log OPT)$ to find a solution $y$ for which $LP(y)=0$ and $Cost(y) + 2 \cdot LP(y) \leq (1+\varepsilon) \cdot OPT$.

 Overall, the expected number of iterations of \gsemo with alternative mutation operator, for getting a $(1+\varepsilon)$-approximate weighted vertex cover, is bounded by $O(OPT \cdot 2^{\min \{n,2(1- \varepsilon)OPT \}} + OPT \cdot n ( \log W_{max} + \log n+OPT))$.
\end{proof}

\section{Analysis of \demo}
\label{sec:4}
Due to Lemma~\ref{lem:GSEMOPopSize}, with \gsemo, the population size is upper bounded by $O(OPT)$, which can be exponential in terms of the input size. In this section, we  analyse the other evolutionary algorithm, \demo (Algorithm~\ref{alg:DEMO}), that uses some diversity handling mechanisms for dealing with exponentially large population sizes.
The following lemmata are used in the proof of Theorem~\ref{thm:2ApproxBox}.

\begin{lemma}
\label{lem:popSize}
Let $W_{max}$ be the maximum weight assigned to a vertex. The population size of \demo is upper bounded by $ O\left(n\cdot(\log n +\log W_{max})\right)$.
\end{lemma}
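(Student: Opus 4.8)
The plan is to combine two ingredients: a polynomial bound on the total number of boxes (i.e.\ on the side lengths of the grid of $(b_1,b_2)$-values that can ever be attained), together with a structural observation that the boxes occupied by $P$ always form an \emph{antichain} with respect to a natural strict order on boxes, so that only $O(\text{grid side length})$ of them can be present at once. The naive ``at most one solution per box'' count gives a quadratic bound, which is why the antichain step is needed.

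First I would bound the side lengths of the box grid. Since $Cost(x)=\sum_i w(v_i)x_i$ is a nonnegative integer bounded by $n\cdot W_{max}$, the value $b_1(x)=\lceil\log_{1+\delta}(1+Cost(x))\rceil$ lies in $\{0,1,\ldots,\lceil\log_{1+\delta}(1+nW_{max})\rceil\}$; using $\delta=1/(2n)$ and $\ln(1+\delta)=\Theta(\delta)$, this is a set of $B_1=O(n\log(nW_{max}))=O(n(\log n+\log W_{max}))$ integers. Likewise, by Lemma~\ref{lem:maxLP} we have $0\le LP(x)\le OPT\le\sum_i w(v_i)\le nW_{max}$ (taking all vertices is a feasible cover), so $b_2(x)$ ranges over $B_2=O(n(\log n+\log W_{max}))$ integers. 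Hence at most $B_1B_2$ boxes exist overall.

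Next I would establish the key invariant: at all times $P$ holds at most one solution per box, and two solutions $x,y\in P$ in boxes $B\ne B'$ with $b_1(B)<b_1(B')$ and $b_2(B)<b_2(B')$ cannot coexist. The first part follows directly from the deletion rule (any $z$ with $b(z)=b(x')$ is removed when $x'$ is inserted, and $x'$ is itself rejected if its box is already occupied). For the second part, note that $t\mapsto\log_{1+\delta}(1+t)$ is strictly increasing, so $b_1(B)<b_1(B')$ forces $Cost(x)<Cost(y)$ and $b_2(B)<b_2(B')$ forces $LP(x)<LP(y)$; thus $x$ strongly dominates $y$. But the acceptance/deletion rules of \demo keep $P$ an antichain under strong dominance throughout the run (a standard induction: an inserted $x'$ is not strongly dominated by any member of $P$, every member it strongly dominates is deleted, and pre-existing pairs were already incomparable). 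Hence no such $x,y$ coexist: the occupied boxes form an antichain for the strict partial order $B<B'\iff b_1(B)<b_1(B')\wedge b_2(B)<b_2(B')$.

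Finally I would bound the size of such an antichain combinatorially. Viewing boxes as lattice points in a $B_1\times B_2$ grid, each ``anti-diagonal'' $\{(a,b):a-b=d\}$ is a chain for $<$ (within it, a larger first coordinate means a larger second coordinate), and the $B_1+B_2-1$ anti-diagonals cover the whole grid; by Dilworth's theorem every antichain has size at most $B_1+B_2-1=O(n(\log n+\log W_{max}))$. (Equivalently, one can group the solutions of $P$ by $b_1$-value and argue directly that the $b_2$-ranges of distinct groups are nested end-to-end with at most single-point overlaps, yielding the same bound.) Combining the three steps gives $|P|=O(n(\log n+\log W_{max}))$. The main obstacle is precisely this shift from the quadratic ``one per box'' count to the linear bound: the enabling technical point is that strict domination between boxes implies strict domination of the underlying search points, which hinges on the ceilings in $b_1,b_2$ being applied to a strictly increasing function.
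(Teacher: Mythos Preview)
Your proposal is correct and follows essentially the same three-step structure as the paper: bound the grid dimensions by $O(n(\log n+\log W_{max}))$, observe that the occupied boxes form an antichain because strict box dominance forces strict $f$-dominance, and then bound the antichain size. The only difference is cosmetic: for the antichain bound the paper does a direct column-by-column telescoping count (obtaining $|P|\le 2k-1$), whereas you cover the grid by $B_1+B_2-1$ anti-diagonal chains and invoke the one-element-per-chain property; both arguments yield the same linear bound.
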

\begin{proof}
The values that can be taken by $b_1$ are integer values between 0 and $\lceil \log_{1+\delta}(1+Cost(1^n)) \rceil$ and the values that can be taken by $b_2$ are integer values between 0 and $\lceil \log_{1+\delta}(1+LP(0^n)) \rceil$ (Lemma~\ref{lem:maxLP}). Since $n\cdot W_{max}$ is an upper bound for both $Cost(1^n)$ and $LP(0^n)$, the number of rows and also the number of columns are bounded by
\begin{eqnarray*}
k &=& \left(1+\lceil\log_{1+\delta}(1+n\cdot W_{max})\rceil\right) \\
&\leq & \left(1+\lceil\frac{\log (1+n\cdot W_{max})}{\log(1+\delta)}\rceil\right)\\
&=& O\left(n\cdot(\log n +\log W_{max})\right)
\end{eqnarray*}
The last equality holds because $\delta= \frac{1}{2n}.$

We here show that the size of the population is  $P_{size}\leq 2k-1$.
Since the dominated solutions according to $f$ are discarded by the algorithm, none of the solutions in $P$ can be located in a box that is dominated by another box that contains a solution in $P$. Moreover, at most one solution from each box is kept in the population; therefore, $P_{size}$ is at most the maximum number of boxes where none of them dominates another.

Let $k_1$ be the number of boxes that contain a solution of $P$ in the first column. Let $r_1$ be the smallest row number among these boxes. Observe that $r_1\leq k-k_1+1$ and the equality holds when the boxes are from rows $k$ down to $k-k_1+1$. Any box in the second column with a row number of $r_1+1$ or above is dominated by the box of the previous column and row $r_1$. Therefore, the maximum row number for a box in the second column, that is not dominated, is $r_1\leq k-k_1+1$. With generalizing the idea, the maximum row number for a box in the column $i$, that is not dominated, is $r_{i-1}\leq k-k_1-\cdots -k_{i-1}+i-1$, where for $1\leq j\leq k$, $k_j$ is the number of boxes that contain a solution of $P$ in column $j$.

The last column has $k_k\leq r_{k-1}$ boxes which gives us:
$$k_k\leq r_{k-1}\leq k-k_1-\cdots -k_{k-1}+k-1$$
This implies that
$$k_1+\cdots +k_k\leq r_{k-1}\leq 2k-1$$
which completes the proof.
\end{proof}

\begin{lemma}
\label{lem:solution0box}
The search point $x_z=0^n$ is included in the population in expected time of $O(n^3(\log n +\log W_{max})^2)$.
\end{lemma}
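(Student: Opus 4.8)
The plan is to follow the multiplicative–drift idea behind Lemma~\ref{lem:Solution0}, but to phrase it in terms of the box structure of \demo. Let $r := \min_{x \in P} b_1(x)$ denote the smallest $Cost$-box index occupied by the current population. Since every nonempty vertex set has integer weight at least $1$, a search point has $b_1(x) = 0$ if and only if $Cost(x) = 0$, i.e.\ $x = 0^n$; hence $0^n$ lies in the population exactly when $r = 0$, and once there it is never removed (no solution strongly dominates the unique minimum-cost point, and no other point occupies its box). So it suffices to bound the expected time until $r = 0$. The strategy is: (i) show $r$ is non-increasing; (ii) show that while $r \ge 1$ each iteration decreases $r$ with probability at least $\frac{1}{e\,n\,P_{size}}$, where $P_{size} = O(n(\log n + \log W_{max}))$ bounds the population size by Lemma~\ref{lem:popSize}; (iii) since $b_1$ takes only $k = O(n(\log n + \log W_{max}))$ values, at most $k$ such decreases are needed, and linearity of expectation gives total expected time $O(k \cdot n \cdot P_{size}) = O(n^3(\log n + \log W_{max})^2)$.

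For (i), note that a solution attaining $b_1 = r$ can be discarded only when some newly accepted $x'$ either strongly dominates it (forcing $Cost(x') \le Cost(x_{min})$, hence $b_1(x') \le r$) or occupies its box (so $b_1(x') = r$); and an accepted $x'$ triggers deletions only of solutions it dominates or shares a box with, none of which can have $b_1$ smaller than $b_1(x')$. Thus $\min_{x \in P} b_1(x)$ never increases.

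For (ii), let $x_{min}$ minimise $Cost$ over $P$, so $b_1(x_{min}) = r$; assume $r \ge 1$, so $Cost(x_{min}) \ge 1$. Let $v^*$ be a one-bit of $x_{min}$ of maximum weight $W^*$; as $x_{min}$ has at most $n$ ones, $W^* \ge Cost(x_{min})/n$. The condition $b_1(x_{min}) = r$ gives $(1+\delta)^{r-1} < 1 + Cost(x_{min})$, so the cost reduction needed to reach a box of index $\le r-1$ is $\theta_r := Cost(x_{min}) + 1 - (1+\delta)^{r-1} < \delta(1 + Cost(x_{min})) \le Cost(x_{min})/n \le W^*$, using $\delta = \frac{1}{2n}$ and $Cost(x_{min}) \ge 1$. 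Hence if \demo selects $x_{min}$ and flips precisely the bit of $v^*$ --- probability at least $\frac{1}{P_{size}} \cdot \frac{1}{n}(1-\frac1n)^{n-1} \ge \frac{1}{e\,n\,P_{size}}$ --- the child $x'$ has $Cost(x') = Cost(x_{min}) - W^* < (1+\delta)^{r-1} - 1$, so $b_1(x') \le r - 1$. Because $b_1(x') < r = \min_{x\in P}b_1(x)$, no member of $P$ shares $x'$'s box and, since $Cost(x') < Cost(x_{min}) \le Cost(y)$ for all $y \in P$, none strongly dominates it; thus $x'$ is accepted and $r$ strictly decreases. The waiting time for each unit decrease of $r$ is therefore stochastically dominated by a geometric variable with success probability $\frac{1}{e\,n\,P_{size}}$, which finishes step (iii).

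The main obstacle, and the place where this argument genuinely differs from that of Lemma~\ref{lem:Solution0}, is that $\varepsilon$-dominance does not automatically retain a cost improvement: a child that stays inside the current box is kept only if its $Cost + 2\,LP$ value does not worsen, and deleting a vertex can increase $LP$, so the clean multiplicative drift on $Cost$ used in the \gsemo analysis is unavailable here. The resolution is the observation in step (ii): since consecutive boxes differ only by a factor $1 + \frac{1}{2n}$ while a minimum-cost solution has at most $n$ ones, its heaviest one-bit alone always suffices to jump to a strictly lower box, and that box is guaranteed to be empty and non-dominated, so the improvement is certainly recorded. What remains is only routine: checking the displayed inequality $\theta_r < W^*$ (and the harmless boundary behaviour when $r$ is small, where the relevant low-index boxes are in fact empty), and recalling from the proof of Lemma~\ref{lem:popSize} that both $k$ and $P_{size}$ are $O(n(\log n + \log W_{max}))$; assembling the phases then yields the stated bound $O(n^3(\log n + \log W_{max})^2)$.
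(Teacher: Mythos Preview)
Your proof is correct and follows essentially the same approach as the paper. Both arguments select a minimum-cost solution $x_{min}$, flip the single heaviest $1$-bit $v^*$ (with $w(v^*)\ge Cost(x_{min})/n$), verify that this drops $b_1$ by at least one box, observe that the child is necessarily accepted since its $b_1$-value undercuts everything in $P$, and then run a fitness-level argument over the $O\!\left(n(\log n+\log W_{max})\right)$ possible $b_1$-values with per-step success probability $\Omega\!\left(\frac{1}{n\cdot P_{size}}\right)$. The only cosmetic difference is that the paper shows $(1+\delta)(1+Cost(x'))\le 1+Cost(x_{min})$ by a direct algebraic chain, whereas you equivalently bound the box threshold $\theta_r$ by $\delta(1+Cost(x_{min}))\le Cost(x_{min})/n\le W^*$; your additional paragraph explaining why the \demo box-replacement rule prevents a naive multiplicative-drift-on-$Cost$ argument, and your explicit monotonicity check on $r$, are welcome clarifications but not a different method.
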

\begin{proof}
From Lemma~\ref{lem:popSize} we know that the population contains $P_{size}=O\left(n\cdot(\log n +\log W_{max})\right)$ solutions. Therefore, at each step, there is a probability of at least $\frac{1}{p_{size}}$ that the solution $x_{min}$ is selected where $b_1(x_{min})= \min_{x\in P} \ b_1(x)$.

If $b_1(x_{min})=0$, we have $Cost(x_{min})=0$, which means $x_{min}=0^n$ since the weights are greater than 0.

If $b_1(x_{min})\neq 0$, there must be at least one vertex $v_i$ in $x_{min}$ where $x_i=1$. Consider $v_j$ the vertex that maximizes $w(v_i)$ among vertices $v_i$ where $x_i=1$. If $Cost(x)= C$, then $w(v_j)\geq \frac{C}{n}$, because $n$ is an upper bound on the number of vertices selected by $x_{min}$. As a result, removing vertex $x_j$ from solution $x_{min}$ results in a solution $x'$ for which $Cost(x')\leq C\cdot(1-\frac{1}{n})$.
Using this value of $Cost(x')$, we have
\begin{eqnarray*}
(1+\delta)(1+Cost(x')) &\leq & 1+\delta+ C(1-\frac{1}{n})(1+\delta)\\
&\leq & 1+\delta+ C + C(\delta-\frac{1}{n}-\frac{\delta}{n}) \\
&\leq & 1+C\delta+ C + C(\delta-\frac{1}{n}-\frac{\delta}{n}) \\
&\leq & 1+ C + C(2\delta-\frac{1}{n}-\frac{\delta}{n}) \\
&\leq & 1+ C
\end{eqnarray*}

The third inequality above holds because $C\geq 1$ and the last one holds because $\delta=\frac{1}{2n}$. From $(1+\delta)(1+Cost(x'))\leq 1+C$ we can observe that
$$1+\log_{1+\delta}(1+Cost(x'))\leq \log_{1+\delta}(1+C)$$
which implies $b_1(x')\leq b_1(x)-1$.
Note that $x'$ is obtained by performing a 1-bit flip on $x$ and is done at each step with a probability of at least
$$\frac{1}{P_{size}}\cdot\ \frac{1}{n}\cdot (1-\frac{1}{n})^{n-1} $$
$$= \Omega\left( \frac{1}{n(\log n +\log W_{max})}\cdot\frac{1}{n} \right)$$

Therefore, in expected time of at most $O\left(n^2(\log n +\log W_{max})\right)$ the new solution, $x'$ is obtained which is accepted by the algorithm because it is placed in a box with a smaller value of $b_1$ than all solutions in $P$ and hence not dominated. There are $O\left(n(\log n +\log W_{max})\right)$ different values for $b_1$; therefore, the solution $x_z=0^n$ with $b_1(x_z)=0$ is found in expected time of at most  $O\left(n^3(\log n +\log W_{max})^2\right)$.
\end{proof}

\ignore{
\begin{lemma}
\label{lem:NewOnesFulfilTheConstraint}
If $x$ is a solution for which $Cost(x) +2\cdot LP(x) \leq 2\cdot OPT$ holds, and $v_i$ is a vertex with  $x_i\geq \frac{1}{2}$ in the LP solution for $G(x)$, then adding $v_i$ to $x$ results in a solution $x'$ for which $Cost(x') +2\cdot LP(x') \leq 2\cdot OPT$ holds.
\end{lemma}
\begin{proof}
Since solutions $x$ and $x'$ are only different in one vertex, $v_i$, we have $Cost(x')=Cost(x)+w(v_i)$. Moreover, according to Lemma~\ref{lem:remainingLP}, $LP(x')\leq LP(x)- \frac{1}{2}\cdot w(v_i)$. Therefore,
$$Cost(x') +2\cdot LP(x') \leq Cost(x)+w(v_i) +2\left(LP(x)- \frac{w(v_i)}{2}\right)$$
$$\leq  Cost(x) +2\cdot LP(x) \leq 2\cdot OPT  $$
which completes the proof.\end{proof}
}

\begin{lemma}
\label{lem:OneBitFlipAndLessB2}
Let $x\in P$ be a search point such that $Cost(x) +2\cdot LP(x) \leq 2\cdot OPT$ and $b_2(x)>0$. There exists a 1-bit flip leading to a search point $x'$ with $Cost(x') +2\cdot LP(x') \leq 2\cdot OPT$ and $b_2(x') < b_2(x)$.
\end{lemma}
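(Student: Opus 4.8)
Let me think about what the lemma states. We have a search point $x$ in the population with $Cost(x) + 2 \cdot LP(x) \leq 2 \cdot OPT$ and $b_2(x) > 0$, meaning $LP(x) > 0$ (since $b_2(x) = \lceil \log_{1+\delta}(1+LP(x))\rceil = 0$ iff $LP(x) = 0$). We want to find a single vertex $v_i$ such that flipping $x_i$ (from 0 to 1, i.e., adding $v_i$) gives $x'$ with the same constraint $Cost(x')+2LP(x') \leq 2OPT$ and strictly smaller $b_2$.

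The plan: Since $LP(x) > 0$, there is at least one uncovered edge in $G(x)$, so a half-integral optimal LP solution $y$ for $G(x)$ assigns $y_i \geq 1/2$ to at least one non-isolated vertex $v_i$. I want to pick the non-isolated vertex $v_i$ with $y_i \geq 1/2$ of maximum weight $w(v_i)$. Since the total LP cost is $LP(x) = \sum_j y_j w(v_j)$, and the number of vertices with $y_j \geq 1/2$ is at most $n$, and each such contributes $y_j w(v_j) \le w(v_j) \le w(v_i)$... hmm, actually I want a lower bound: $LP(x) = \sum_j y_j w(v_j) \le \sum_{j: y_j \ge 1/2} w(v_j) + \tfrac12 \sum_{j: y_j = 1/2}\!\!$ — better to just say $LP(x) \le n \cdot w(v_i)$ where $v_i$ is the max-weight vertex among those with $y_j \ge 1/2$ (since every term $y_j w(v_j) \le w(v_j) \le w(v_i)$ when $y_j \in \{1/2, 1\}$ is nonzero, wait not every nonzero term has $y_j \ge 1/2$ — in a half-integral solution $y_j \in \{0, 1/2, 1\}$, so every nonzero term does have $y_j \ge 1/2$). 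So $LP(x) \le n \cdot w(v_i)$, giving $w(v_i) \ge LP(x)/n$.

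Now form $x'$ by adding $v_i$. By Lemma~\ref{lem:remainingLP}, $LP(x') \le LP(x) - y_i w(v_i) \le LP(x) - \tfrac12 w(v_i)$, and by the same Lemma-style argument $Cost(x') = Cost(x) + w(v_i)$, so $Cost(x') + 2LP(x') \le Cost(x) + w(v_i) + 2LP(x) - w(v_i) = Cost(x) + 2LP(x) \le 2OPT$, verifying the constraint (this is exactly the computation in the commented-out Lemma~\ref{lem:NewOnesFulfilTheConstraint}, which I can reproduce inline). For the box decrease: $1 + LP(x') \le 1 + LP(x) - \tfrac12 w(v_i) \le 1 + LP(x) - \tfrac{LP(x)}{2n} = (1+LP(x)) - \tfrac{LP(x)}{2n}$. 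I need $(1+\delta)(1+LP(x')) \le 1+LP(x)$, i.e. $(1+\tfrac{1}{2n})(1+LP(x) - \tfrac{LP(x)}{2n}) \le 1+LP(x)$. Expanding: $1 + LP(x) - \tfrac{LP(x)}{2n} + \tfrac{1}{2n} + \tfrac{LP(x)}{2n} - \tfrac{LP(x)}{4n^2} = 1 + LP(x) + \tfrac{1}{2n} - \tfrac{LP(x)}{4n^2}$. This is $\le 1 + LP(x)$ iff $\tfrac{1}{2n} \le \tfrac{LP(x)}{4n^2}$, i.e. $LP(x) \ge 2n$. Hmm — that fails for small $LP(x)$. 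I think the intended bound is cruder: since weights are positive integers and $LP(x)$ is half-integral, if $LP(x) > 0$ then $LP(x) \ge 1/2$; more importantly $w(v_i) \ge 1$, so $LP(x') \le LP(x) - 1/2$. Combined with $w(v_i) \ge LP(x)/n$ I get $LP(x') \le LP(x) - \tfrac12\max\{1, LP(x)/n\}$. Mimicking the calculation in Lemma~\ref{lem:solution0box} with $C$ replaced by $LP(x)$ and the $1-\tfrac1n$ factor replaced appropriately should close it; I'd follow that template line by line.

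The main obstacle I anticipate is getting the box-index arithmetic to work out cleanly for all values of $LP(x) \ge 1/2$ simultaneously — the "large $LP$" regime (multiplicative drift by a $(1-\tfrac{1}{2n})$ factor) and the "small $LP$" regime (additive drop of $\tfrac12$) need a unified bound of the form $(1+\delta)(1+LP(x')) \le 1+LP(x)$, and I expect I'll need to case-split on whether $LP(x) \le 2n$ or not (or use $w(v_i) \ge \max\{1, LP(x)/n\}$ directly), then verify the resulting inequality with the same style of chain of $\le$'s used in Lemma~\ref{lem:solution0box}, invoking $\delta = \tfrac{1}{2n}$ at the end.
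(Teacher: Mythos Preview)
Your overall strategy matches the paper's: pick a vertex with positive LP-value, add it, verify the $Cost + 2\,LP \le 2\,OPT$ invariant, and show the $b_2$-index drops. The invariant check is fine. The problem is in \emph{which} vertex you pick, and this is exactly where your box calculation breaks.

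You choose $v_i$ maximizing $w(v_i)$ among vertices with $y_i \ge \tfrac12$, which gives $w(v_i) \ge LP(x)/n$ and hence only $LP(x') \le LP(x)\bigl(1 - \tfrac{1}{2n}\bigr)$ in the worst case (when $y_i = \tfrac12$). With $\delta = \tfrac{1}{2n}$ this multiplicative factor is too weak: your own expansion shows $(1+\delta)(1+LP(x')) \le 1+LP(x)$ would require $LP(x) \ge 2n$. Your proposed case-split does not close the gap either: the additive bound $LP(x') \le LP(x) - \tfrac12$ yields the box drop only when $LP(x) \le n - \tfrac12$, so the entire half-integral range $n \le LP(x) \le 2n - \tfrac12$ is left uncovered by both cases. (Concretely, try $n=2$, $LP(x)=2$: you get $LP(x') \le 3/2$, and $(1+\tfrac14)(1+\tfrac32) = \tfrac{25}{8} > 3$.)

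The fix is a one-line change: pick $v_j$ maximizing the \emph{contribution} $y_i w(v_i)$ rather than $w(v_i)$. Since $LP(x) = \sum_i y_i w(v_i)$ has at most $n$ nonzero terms, averaging gives $y_j w(v_j) \ge LP(x)/n$, and Lemma~\ref{lem:remainingLP} then yields $LP(x') \le LP(x) - y_j w(v_j) \le LP(x)\bigl(1-\tfrac{1}{n}\bigr)$. This is exactly the $(1-\tfrac1n)$ factor used in Lemma~\ref{lem:solution0box}, so that chain of inequalities can be reused verbatim (it needs $LP(x) \ge 1$; the boundary case $LP(x) = \tfrac12$ is trivial since then $LP(x') = 0$). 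This is what the paper does.
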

\begin{proof}
Let $y=\{y_1\cdots y_n\}$ be a basic half integral LP solution for $G(x)$. Since $b_2(x)=LP(x)\neq 0$, there must be at least one uncovered edge; hence, at least one vertex $v_i$ has a $y_i\geq\frac{1}{2}$ in LP solution $y$. Consider $v_j$ the vertex that maximizes $y_i w(v_i) $ among vertices $v_i, \ 1\leq i\leq n$. Also, let $x'$ be a solution obtained by adding $v_j$ to $x$.
Since solutions $x$ and $x'$ are only different in one vertex, $v_j$, we have $Cost(x')=Cost(x)+w(v_j)$. Moreover, according to Lemma~\ref{lem:remainingLP}, $LP(x')\leq LP(x)- \frac{1}{2}\cdot w(v_j)$. Therefore,
$$Cost(x') +2\cdot LP(x') \leq Cost(x)+w(v_j) +2\left(LP(x)- \frac{w(v_j)}{2}\right)$$
$$\leq  Cost(x) +2\cdot LP(x) \leq 2\cdot OPT  $$
which means solution $x'$ fulfils the mentioned constraint. If $LP(x)= W$, then $y_jw(v_j)\geq \frac{W}{n}$, because $n$ is an upper bound on the number of vertices selected by the LP solution. As a result, using Lemma~\ref{lem:remainingLP}, we get $LP(x')\leq W\cdot(1-\frac{1}{n})$.
Therefore, with similar analysis as Lemma~\ref{lem:solution0box} we get:
\begin{eqnarray*}
(1+\delta)\left(1+LP(x')\right) &\leq & 1+\delta+ W\left(1-\frac{1}{n}\right)(1+\delta)\\
&\leq & 1+ W
\end{eqnarray*}

This inequality implies
$$1+\log_{1+\delta}(1+LP(x'))\leq \log_{1+\delta}(1+W)$$
 As a result, $b_2(x') < b_2(x)$ holds for $x'$, which is obtained by performing a 1-bit flip on $x$, and the lemma is proved.
\end{proof}

\begin{theorem}
\label{thm:2ApproxBox}
The expected time until \demo constructs a 2-approximate vertex cover is $O\left(n^3\cdot(\log n +\log W_{max})^2\right)$.
\end{theorem}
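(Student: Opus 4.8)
The plan is to chain Lemma~\ref{lem:solution0box} with Lemma~\ref{lem:OneBitFlipAndLessB2}, tracking throughout the quantity
\[
\beta := \min\{\, b_2(x) \mid x\in P,\ Cost(x)+2\cdot LP(x)\le 2\cdot OPT \,\}.
\]
First I would note that as soon as $x_z=0^n$ is in $P$, this set is non-empty, since $Cost(0^n)+2\cdot LP(0^n)=2\cdot LP(0^n)\le 2\cdot OPT$ by Lemma~\ref{lem:maxLP}; moreover $\beta\le b_2(0^n)=O(n(\log n+\log W_{max}))$, using the bound on the number of distinct $b_2$-values established in the proof of Lemma~\ref{lem:popSize}. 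By Lemma~\ref{lem:solution0box}, $0^n$ enters the population within an expected $O(n^3(\log n+\log W_{max})^2)$ iterations, which already matches the claimed bound, so it suffices to show that the remaining work also fits in this budget.

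Next I would establish that $\beta$ never increases. A population member $x^\ast$ attaining $\beta$ is discarded only when some offspring $x'$ either (weakly) dominates it in $f$ — then $LP(x')\le LP(x^\ast)$, so $b_2(x')\le\beta$, and $Cost(x')+2\cdot LP(x')\le Cost(x^\ast)+2\cdot LP(x^\ast)\le 2\cdot OPT$, so $x'$ is a new witness with box coordinate $\le\beta$ — or lands in the same box as $x^\ast$ and is accepted, which forces $Cost(x')+2\cdot LP(x')< Cost(x^\ast)+2\cdot LP(x^\ast)$, so $x'$ again satisfies the constraint and has $b_2(x')=\beta$. The same case distinction shows that whenever an offspring with constraint value $\le 2\cdot OPT$ and $b_2<\beta$ is \emph{generated}, either it is inserted or it is blocked by some $y\in P$ that dominates it in $f$ or shares its box with no larger value of $Cost+2\cdot LP$; in either situation $y$ satisfies the constraint and witnesses that $\beta$ has dropped. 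Hence any iteration that generates such an offspring strictly decreases $\beta$, and no iteration can increase it.

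The drift step then goes as follows. While $\beta>0$, fix $x^\ast\in P$ attaining $\beta$; it satisfies the hypotheses of Lemma~\ref{lem:OneBitFlipAndLessB2}, so there is one particular bit whose flip produces $x'$ with $Cost(x')+2\cdot LP(x')\le 2\cdot OPT$ and $b_2(x')<\beta$. \demo selects $x^\ast$ and performs exactly that $1$-bit flip with probability at least
\[
\frac{1}{O(n(\log n+\log W_{max}))}\cdot\frac1n\Bigl(1-\frac1n\Bigr)^{n-1}=\Omega\!\left(\frac{1}{n^2(\log n+\log W_{max})}\right),
\]
using Lemma~\ref{lem:popSize} for the population size, and by the previous paragraph such an iteration decreases $\beta$. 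Since $\beta$ starts at $O(n(\log n+\log W_{max}))$ and each of its (at most that many) decrements takes expected $O(n^2(\log n+\log W_{max}))$ iterations, a fitness-level argument over the box rows gives expected $O(n^3(\log n+\log W_{max})^2)$ iterations to reach $\beta=0$. When $\beta=0$, the witnessing $x\in P$ has $LP(x)=0$, hence covers every edge, and $Cost(x)\le Cost(x)+2\cdot LP(x)\le 2\cdot OPT$, i.e. $x$ is a $2$-approximation. Adding the two $O(n^3(\log n+\log W_{max})^2)$ phase bounds yields the theorem.

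I expect the main obstacle to be the second step: one must check, across all of \demo's $\varepsilon$-box deletions and $f$-dominance deletions, that the minimum of $b_2$ over constraint-satisfying members of $P$ is genuinely monotone and stays finite once $0^n$ is present, so that the single-step improvement of Lemma~\ref{lem:OneBitFlipAndLessB2} can be chained over the $O(n(\log n+\log W_{max}))$ box rows without $\beta$ ever rebounding.
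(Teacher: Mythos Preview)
Your argument is correct and follows the same approach as the paper: track the minimum $b_2$ over solutions satisfying $Cost(x)+2\cdot LP(x)\le 2\cdot OPT$, use Lemma~\ref{lem:solution0box} to initialize via $0^n$, and apply Lemma~\ref{lem:OneBitFlipAndLessB2} together with the population bound of Lemma~\ref{lem:popSize} to get a fitness-level argument over the $O(n(\log n+\log W_{max}))$ values of $b_2$. Your treatment of the monotonicity of $\beta$ under \demo's box and $f$-dominance deletions is more explicit than the paper's, which handles acceptance of the improving offspring but leaves the non-increase of $\beta$ largely implicit; otherwise the proofs coincide.
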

\begin{proof}
Consider solution $x\in P$ that minimizes $b_2(x)$ under the constraint that $Cost(x) +2\cdot LP(x) \leq 2\cdot OPT$. Note that $0^n$ fulfils this constraint and according to Lemma~\ref{lem:solution0box}, the solution $0^n$ will be included in $P$ in time $O\left(n^3(\log n +\log W_{max})^2\right)$.

If $b_2(x)=0$ then $x$ covers all edges and by selection of $x$ we have $Cost(x)\leq 2\cdot OPT$, which means that $x$ is a $2-$approximation.

In case $b_2(x)\neq 0$, according to Lemma~\ref{lem:OneBitFlipAndLessB2} there is a one-bit flip on $x$ that results in a new solution $x'$ for which $b_2(x')< b_2(x)$, while the mentioned constraint also holds for it. Since the population size is $O\left(n\cdot(\log n +\log W_{max})\right)$ (Lemma~\ref{lem:popSize}), this 1-bit flip happens with a probability of $\Omega\left(n^{-2}\cdot(\log n +\log W_{max})^{-1}\right)$ and $x'$ is obtained in expected time of $O(n^3\cdot(\log n +\log W_{max})^2)$.  This new solution will be added to $P$ because a solution $y$ with $Cost(y) +2\cdot LP(y) > 2\cdot OPT$ can not dominate $x'$ with $Cost(x') +2\cdot LP(x') \leq 2\cdot OPT$, and $x'$ has the minimum value of $b_2$ among solution that fulfil the constraint. Moreover, if there already is a solution, $x_{prev}$, in the same box as $x'$, it will be replaced by $x'$ because $Cost(x_{prev}) +2\cdot LP(x_{prev})>2\cdot OPT$; otherwise, it would have been selected as $x$.

There are at most $1+\lceil \frac{\log n +\log W_{max}}{\log(1+\delta)} \rceil$ different values for $b_2$ in the objective space, therefore, the expected time until a solution $x''$ is found so that $b_2(x'')=0$ and $Cost(x'') +2\cdot LP(x'') \leq 2\cdot OPT$, is at most $O(n^3\cdot(\log n +\log W_{max})^2)$.
\end{proof}

\section{Diverse Population-based EA}
\label{sec:EpsilonApproxDEMO}
In this section, we introduced a population-based algorithm (see Algorithm~\ref{alg:IMP}) that keeps for each $k$, $0 \leq k \leq n$, at most two solutions. This implies that the population size is upper bounded by $2n$. The two solutions kept in the population are chosen according to  different weighing of the cost and the LP-value.  For each solution $x$, let $|x|_1$ be the number of selected nodes in $x$. Algorithm~\ref{alg:IMP} keeps a new solution $x'$ in the population, if it minimizes $Cost(z)+LP(z)$ or $Cost(z)+2\cdot LP(z)$ among other solutions $x \in P$ where $|x|_1=|x'|_1$. Algorithm~\ref{alg:IMP} gives a detailed description.

\begin{algorithm}[t]
  \caption{\popBased}
   \label{alg:IMP}
    Choose $x\in \{0,1\}^n$ uniformly at random\;
    $P\leftarrow \{x\}$\;
    \Repeat {termination condition satisfied}
	{
    Choose $x\in P$ uniformly at random\;
    Create $x'$ by using Alternative Mutation Operator\;
    $P\leftarrow \{x'\}$\;
    Let $P'$ be a set containing all solutions $y\in P$ where $|y|_1=|x'|_1$\;
    Find solutions $y_{min_1}$ and $y_{min_2}$ from $P'$ such that
    $y_{min_1}$ minimizes $Cost(z)+LP(z)$, and
    $y_{min_2}$ minimizes $Cost(z)+2\cdot LP(z)$ among solutions $z \in P'$\;
    $P = P\setminus P'$\;
    $P \leftarrow \{y_{min_1} ,y_{min_2}\}$\;
	}
\end{algorithm}

\ignore{
\begin{lemma}
\label{lem:ImpPopulationSize}
The population size of \popBased is upper bounded by $2n$.
\end{lemma}
\begin{proof}
Observe that for each integer value $c$, where $0 \leq c \leq n$, there are at most two solutions $x_1$ and $x_2$ with $|x_1|_1 = |x_2|_1 = c$ in the population.    Thus the size of the population, $p_{size}$ is upper bounded by $2n$.
\end{proof}
}

Taking into account that the population size is upper bounded by $2n$ and considering in each step an individual with the smallest number of ones in the population for mutation, one can obtain the following lemma by standard fitness level arguments.
\begin{lemma}
\label{lem:ImpIncludesAllZeroBitstring}
The search point $0^n$ is included in the population in expected time of $O(n^2 \log n)$.
\end{lemma}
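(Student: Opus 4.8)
The plan is to track the potential $M(P) = \min_{x \in P} |x|_1$, the smallest number of selected nodes among the solutions currently stored, and to argue by a standard fitness-level (coupon-collector-style) calculation that $M$ reaches $0$ within $O(n^2\log n)$ iterations; the unique search point with $M = 0$ is exactly $0^n$. First I would verify that $M$ is non-increasing along the run. When an offspring $x'$ is produced, the archiving step of Algorithm~\ref{alg:IMP} deletes and re-inserts solutions only within the single ones-count level $|y|_1 = |x'|_1$, and it retains up to two of them; consequently a solution realizing the current minimum ones-count is either untouched (if $|x'|_1 \neq M$) or lies in a level from which at least one representative is kept (if $|x'|_1 = M$), so $M$ never increases. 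Since the initial random search point has at most $n$ ones, $M \le n$ at the start.

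Next I would lower-bound the probability of strictly decreasing $M$ in a single iteration. As the population holds at most $2n$ members, a solution $x^{*}$ attaining $M(P) = j$ is chosen for mutation with probability at least $1/(2n)$. The Alternative Mutation Operator takes the branch $b = 0$ with probability $1/2$, in which case every bit is flipped independently with probability $1/n$; the probability that exactly one of the $j$ one-bits of $x^{*}$ flips and no other bit changes is $\frac{j}{n}\left(1-\frac1n\right)^{n-1} \ge \frac{j}{en}$. The resulting offspring $x'$ has $j-1$ ones, which is a level not occupied by any other member of $P$, so $x'$ survives the archiving step (it becomes the sole element of the corresponding $P'$) and $M$ drops to $j-1$. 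Hence $M$ decreases from level $j$ in a given iteration with probability at least $\frac{1}{2n}\cdot\frac12\cdot\frac{j}{en} = \frac{j}{4en^{2}}$.

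Finally I would sum the level-by-level waiting times: the expected number of iterations spent with $M = j$ is at most $\frac{4en^{2}}{j}$, so the expected time until $M = 0$, i.e.\ until $0^n$ enters $P$, is at most $\sum_{j=1}^{n} \frac{4en^{2}}{j} = 4en^{2}\sum_{j=1}^n \frac1j = O(n^{2}\log n)$; and once $0^n$ is in $P$ it stays there, being the only solution on level $0$ and therefore automatically the minimizer of both $Cost(z)+LP(z)$ and $Cost(z)+2\cdot LP(z)$ on that level. I do not expect a genuine obstacle here; the only points needing a careful sentence are the monotonicity of $M$ — i.e.\ checking that the ``replace a whole level by its two best'' rule of Algorithm~\ref{alg:IMP} can never empty the minimum-ones level — and the observation that a single one-bit flip places the offspring on a strictly lower, previously unoccupied level, so that it cannot be discarded by the archiving step.
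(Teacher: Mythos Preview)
Your proposal is correct and follows essentially the same approach as the paper: track the minimum number of ones in the population, use the $O(n)$ population bound to get selection probability $\Omega(1/n)$, and apply a standard fitness-level (coupon-collector) argument over the $n$ levels. Your write-up is in fact more careful than the paper's sketch, since you explicitly handle the $b=0$ branch of the Alternative Mutation Operator and verify that the archiving rule never increases the minimum ones-count.
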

\ignore{
\begin{proof}
The probability  of selecting $x_{min}$ where $|x_{min}|_1 = \min_{x\in P} \ |x|_1$ is at least $1/2n$ as the population size is upper bounded by $2n$..

If $|x_{min}|_1= 0$, then the solution $0^n$ already exists in the population. Otherwise, $|x_{min}|_1= k>0$, and there exist $k$ 1-bit flips on $x_{min}$ that results in a solution $x'$ for which $|x'|_1 = |x_{min}|_1 -1$. Solution $x'$ is added to $P$, because, by selection of $x_{min}$, there is no solution $y \in P$ with $|y|_1 = |x'|_1$ to compete with $x'$. The probability that $x_{min}$ is selected and the proper mutation happens is
$\Omega\left(\frac{k}{n^2}\right)$. Therefore, in expected time $O(n^2 \log n)$ a solution $z$ with $|z|_1=0$ is found by the algorithm.
\end{proof}
}

To show the main result for \popBased, we will use the following lemma.

\begin{lemma}
\label{lem:Lemma6}
A solution $x$ fulling the two properties

\begin{enumerate}
\item  $LP(x) = LP(0^n) - Cost(x)$ and
\item  there is an optimal solution of the LP for G(x) which assigns 1/2 to each non-isolated vertex of $G(x)$
\end{enumerate}
 is included in the population of the \popBased in expected time $O(n^3)$.
\end{lemma}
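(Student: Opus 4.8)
The plan is to mirror the structure of the proof of Lemma~\ref{lem:Algorithm4containsimportantbitstring}, but to account for the different population-management rule of \popBased. First I would observe that, by Lemma~\ref{lem:ImpIncludesAllZeroBitstring}, the search point $0^n$ enters the population in expected time $O(n^2\log n)$, and $0^n$ trivially satisfies property~1 since $LP(0^n)=LP(0^n)-Cost(0^n)$. Once $0^n$ is present, it can never be expelled from the slot $|x|_1=0$: it simultaneously minimizes both $Cost(z)+LP(z)$ and $Cost(z)+2\cdot LP(z)$ among all solutions with zero ones (there is only one such solution). So after $O(n^2\log n)$ steps we may assume $0^n\in P$ permanently.

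Next I would set up an analogue of the set $P'$ used in the proof of Lemma~\ref{lem:Algorithm4containsimportantbitstring}: let $Q\subseteq P$ be the set of solutions $x$ satisfying property~1, i.e.\ $LP(x)+Cost(x)=LP(0^n)$, and let $x_{\min}\in Q$ minimize $LP(x_{\min})$. If the optimal fractional cover for $G(x_{\min})$ assigns $1/2$ to every non-isolated vertex of $G(x_{\min})$, then $x_{\min}$ already fulfils both properties and we are done. Otherwise the optimal LP solution for $G(x_{\min})$ assigns $1$ to some non-isolated vertex $v$; by Lemma~\ref{lem:remainingLP} (applied with $y_i=1$), flipping the bit of $v$ produces $x_{new}$ with $Cost(x_{new})=Cost(x_{\min})+w(v)$ and $LP(x_{new})\le LP(x_{\min})-w(v)$, hence $LP(x_{new})+Cost(x_{new})\le LP(0^n)$; combined with Lemma~\ref{lem:maxLP}-style reasoning that $LP(0^n)$ is a lower bound for this quantity on any cover extension we get equality, so $x_{new}\in Q$ with strictly smaller $LP$-value. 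The key new point — and the main obstacle — is to argue that $x_{new}$ is actually kept: $x_{new}$ has $|x_{new}|_1=|x_{\min}|_1+1$, and among all solutions sharing that number of ones, $x_{new}$ minimizes $Cost(z)+LP(z)$ (equal to $LP(0^n)$, the global minimum of this quantity by Lemma~\ref{lem:maxLP}), so it must be retained as $y_{min_1}$ for its slot. Thus $x_{new}$ survives, and moreover every solution ever placed in $Q$ stays representable in $P$ because any solution with $Cost(z)+LP(z)=LP(0^n)$ occupying slot $|z|_1$ will be chosen as the $y_{min_1}$ for that slot.

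Finally I would bound the number of improvement steps. The probability that \popBased selects $x_{\min}$ and performs the desired single-bit flip on $v$ is $\Omega(1/n^2)$: selecting $x_{\min}$ costs $\Omega(1/n)$ since $|P|\le 2n$, setting $b=0$ in the Alternative Mutation Operator costs $\Omega(1)$, and then the standard $1/n$-mutation flips exactly the bit of $v$ with probability $\Omega(1/n)$. Each such step strictly decreases $LP(x_{\min})$, and since $LP(x_{\min})\le LP(0^n)\le OPT\le n\cdot W_{max}$ with integer weights, at most $O(OPT)=O(n)$ improvements in the sense of fitness-based partitions suffice — more precisely, $LP$ takes at most $LP(0^n)+1$ half-integral-weighted values, but by the argument in Lemma~\ref{lem:Algorithm4containsimportantbitstring} the relevant count is $O(OPT)$, giving expected time $O(OPT\cdot n^2)=O(n^3)$ after $0^n$ is in the population. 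Since the $O(n^2\log n)$ for reaching $0^n$ is dominated, the total expected time is $O(n^3)$, as claimed.
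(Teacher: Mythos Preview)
Your argument has a genuine gap in the final step count. You bound the number of improvements by $O(OPT)$ via fitness-based partitions on the $LP$-value and then write ``$O(OPT)=O(n)$, giving expected time $O(OPT\cdot n^2)=O(n^3)$''. In the weighted setting this is simply false: $OPT$ can be as large as $n\cdot W_{max}$, which may be exponential in the input size. So as written your proof only yields $O(OPT\cdot n^2)$, not $O(n^3)$. Note also that in $Q$ a smaller $LP$-value means a larger $Cost$, but in the weighted problem larger $Cost$ does not imply more $1$-bits, so you cannot recover an $n$-bound indirectly from your $LP$-based measure.

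The paper sidesteps this by choosing a progress measure that is intrinsically bounded by $n$: instead of the solution in $Q$ with the smallest $LP$-value, it tracks the solution $x_{\max}\in Q$ with the \emph{largest number of $1$-bits}. The same single-bit flip on a vertex $v$ assigned $1$ by the LP produces $x'\in Q$ with $|x'|_1=|x_{\max}|_1+1$, and (exactly by your own acceptance argument, namely that any $Q$-solution achieves the global minimum $Cost(z)+LP(z)=LP(0^n)$ and hence wins the $y_{min_1}$ slot) $x'$ is retained. Thus the maximum number of $1$-bits among $Q$-solutions strictly increases, and since $|x|_1\le n$ at most $n$ such improvements are needed, giving $O(n\cdot n^2)=O(n^3)$ irrespective of the weights. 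Your proof is easily fixed by switching to this progress measure; everything else you wrote (reaching $0^n$, the construction of $x_{new}$, and the acceptance reasoning via $y_{min_1}$) is correct and matches the paper.
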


\begin{proof}
By Lemma~\ref{lem:ImpIncludesAllZeroBitstring}, solution $0^n$ is contained in the population in expected time $O(n^2 \log n)$, which satisfies the property 1 given above. Let $P' \subseteq P$ be a set containing all solutions in $P$ that satisfy the property 1 given above.

Let $x_{max}$ be the solution of $P'$ with the  maximal number of $1$-bits. If the optimal fractional vertex cover for $G(x_{max})$ assigns 1/2 to each non-isolated vertex of $G(x_{max})$, then the second property also holds. If the optimal fractional vertex cover for $G(x_{max})$ assigns 1 to some non-isolated vertex, say $v$, then
the algorithm selects $x_{max}$ and flips exactly the bit corresponding to $v$ with probability $\Omega (\frac{1}{n^2})$. Let $x'$ be the new solution. By selection of $x_{max}$ we know that $x'$ is the only solution with $|x_{max}|_1+1$ one-bits; hence, added to $P$.

Since the maximum value of $|x|_1$ is $n$, after expected time of $O(n^3)$, there is a solution in the population that fulfils the properties given in the lemma.
\end{proof}

We now show the main result for the \popBased.

\begin{theorem}
\label{thm:ImprovedEpsilon}
The expected time until \popBased has obtained a solution that  has expected approximation ratio $(1+ \varepsilon)$ is $O(n\cdot 2^{\min \{n,2(1- \varepsilon)OPT \}} + n^3)$.
\end{theorem}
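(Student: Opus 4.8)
The plan is to mirror the three-phase structure of the proof of Theorem~\ref{thm:Epsilon}, exploiting that \popBased has population size at most $2n$ (rather than $\Theta(OPT)$) and replacing its closing multiplicative-drift step by a fitness-level argument over the $n+1$ slots indexed by the number of selected nodes. First I would invoke Lemma~\ref{lem:Lemma6}: after expected time $O(n^3)$ the population contains a solution $x$ with $LP(x)=LP(0^n)-Cost(x)$ having an optimal LP solution for $G(x)$ that assigns $1/2$ to every non-isolated vertex of $G(x)$. Fixing such an $x$, I set up exactly as in Theorem~\ref{thm:Epsilon}: $X=V(G(x))$; $S\subseteq X$ a minimum-weight vertex cover of $G(x)$; $T$ the non-isolated vertices of $X\setminus S$; $OPT'=OPT-Cost(x)$, so that property~2 gives $OPT'=Cost(S)$ and $Cost(T)\le Cost(S)$. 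Ordering $S$ by non-decreasing and $T$ by non-increasing weight, I let $S_1$ consist of the lightest $\rho=\min\{|S|,(1-\varepsilon)OPT'\}$ vertices of $S$ and $T_1$ of the heaviest $\eta=\min\{|T|,(1-\varepsilon)OPT'\}$ vertices of $T$.

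For the lucky-mutation phase I estimate the probability, in one iteration, that $x$ is selected (which is $\Omega(1/n)$ since $|P|\le 2n$), that the alternative operator uses $b=1$, that all bits of $S_1$ flip, and that no bit of $T_1$, no isolated vertex of $G(x)$, and no already-selected node flips. Since under $b=1$ the vertices of $S_1\cup T_1$ (non-isolated in $G(x)$) flip with probability $1/2$ and the remaining vertices with probability $1/n$, this event has probability $\Omega\!\bigl(n^{-1}2^{-(\rho+\eta)}\bigr)$. Because $S$ and $T$ are disjoint subsets of $X$ we have $\rho+\eta\le|S|+|T|\le n$, and also $\rho+\eta\le 2(1-\varepsilon)OPT'\le 2(1-\varepsilon)OPT$, so within expected time $O\!\bigl(n\cdot 2^{\min\{n,2(1-\varepsilon)OPT\}}\bigr)$ this produces a solution $x'$. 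In $x'$ the bits of $S_2=S\setminus S_1$ and $T_2=T\setminus T_1$ have been flipped independently with probability $1/2$; the inequality $E[Cost(x')+2\cdot LP(x')]\le(1+\varepsilon)\cdot OPT$ then follows by exactly the computation carried out in the proof of Theorem~\ref{thm:Epsilon}, which concerns only the mutation outcome and is unchanged here. I write $v_0:=Cost(x')+2\cdot LP(x')$, so $E[v_0]\le(1+\varepsilon)OPT$.

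For the final phase I track, for each $k\in\{0,\dots,n\}$, the quantity $m_k=\min\{Cost(z)+2\cdot LP(z):z\in P,\,|z|_1=k\}$; since \popBased always retains the minimizer $y_{min_2}$ of $Cost+2\cdot LP$ in each slot, $m_k$ is non-increasing over time, and once $x'$ has entered $P$ we have $m_{|x'|_1}\le v_0$. If the $(Cost+2\cdot LP)$-minimizer $z$ of the largest slot that has reached value $\le v_0$ still has $LP(z)>0$, then by Lemma~\ref{lem:remainingLP} some vertex $v_i\notin z$ with value $\ge 1/2$ in a basic LP solution of $G(z)$ satisfies $Cost(z\cup\{v_i\})+2\cdot LP(z\cup\{v_i\})\le Cost(z)+2\cdot LP(z)\le v_0$; selecting $z$ and flipping exactly the bit of $v_i$ under $b=0$ has probability $\Omega(1/n^2)$ and lowers the next slot's value to $\le v_0$. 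As the slot index is bounded by $n$, after at most $n$ such advances, each taking expected time $O(n^2)$, some slot reaches value $\le v_0$ with a minimizer of LP-value $0$, i.e.\ a complete vertex cover $y^*$ with $Cost(y^*)\le v_0$. Hence $E[Cost(y^*)]\le(1+\varepsilon)OPT$, and adding the $O(n^3)$ bounds of the first and last phases gives the claimed $O\!\bigl(n\cdot 2^{\min\{n,2(1-\varepsilon)OPT\}}+n^3\bigr)$.

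I expect the last phase to be the main obstacle: because \popBased may overwrite the individual currently occupying a slot, one has to argue that the bound $Cost+2\cdot LP\le v_0$ is genuinely preserved as it propagates from slot $|x'|_1$ towards a full cover. The two ingredients that make this go through are the monotonicity of each $m_k$ and the fact, from Lemma~\ref{lem:remainingLP}, that an LP-reducing one-bit flip increases the number of ones by exactly one while never increasing $Cost+2\cdot LP$; pinning down that $v_0$ is the quantity that must be carried unchanged through all three phases is the step that needs the most care.
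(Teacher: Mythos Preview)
Your proposal is correct and follows essentially the same three-phase argument as the paper: invoke Lemma~\ref{lem:Lemma6}, reuse the lucky-mutation calculation from Theorem~\ref{thm:Epsilon} with the $O(1/n)$ selection probability, and then push the $Cost+2\cdot LP$ bound through the slots by single LP-reducing bit flips until a full cover appears. The paper phrases the last phase via the solution $x_{\max}$ of maximal $|\cdot|_1$ among those satisfying the constraint, whereas you phrase it via the per-slot minima $m_k$; these are the same invariant, and your explicit observation that $m_k$ is non-increasing because the algorithm always retains $y_{\min_2}$ is exactly the point the paper uses when arguing that a competing $y'$ in the target slot would contradict maximality of $x_{\max}$. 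Your handling of the random threshold $v_0$ is in fact slightly cleaner than the paper's, which writes $Cost(x')+2\cdot LP(x')\le(1+\varepsilon)\cdot OPT$ while the bound is only established in expectation.
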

\begin{proof}
By Lemma~\ref{lem:Lemma6} we know that after expected time of $O(n^3)$, there is a solution, $x$, in the population that fulfils the properties given in  that lemma. With analysis similar to what we had in Theorem~\ref{thm:Epsilon}, we can show that a solution $x$ with
$Cost(x) + 2 \cdot LP(x) \leq (1+\varepsilon) \cdot OPT$ is produced in expected time $O(n\cdot 2^{\min \{n,2(1- \varepsilon)OPT \}} + n^3)$.

Now we see whether solution $x$ is added to population $P$. If $x$ could not be added to $P$, then there exists a solution $y \in P$ such that $|y|_1 = |x|_1$ and $Cost(y) + 2 \cdot LP(y) \leq Cost(x) + 2 \cdot LP(x)$. Thus, the population  already includes a solution $y$ such that $Cost(y) + 2 \cdot LP(y) \leq (1+\varepsilon) \cdot OPT$.

Let $P'$ be a set containing all solutions  $x \in P$ such that $Cost(x) + 2 \cdot LP(x) \leq (1+\varepsilon) \cdot OPT$. Let $x_{max} \in P'$  such that $|x_{max}|_1 = \max_{x \in P'} |x|_1$.

If $LP(x_{max}) = 0$, then solution $x_{max}$ leads to a vertex cover for graph $G$. If $LP(x_{max}) > 0$, we present a way to construct a $(1+\varepsilon)$-approximate vertex cover as follows, using $x_{max}$. If $LP(x_{max}) > 0$, then there exists at least one vertex $v$ to which the optimal fractional vertex cover $LP(x_{max})$ assigns value at least 1/2. Then the algorithm selects the solution $x_{max}$ and flips exactly the bit corresponding to the vertex $v$ with probability $\Omega (\frac{1}{n^2})$.
Let $y$ be the new solution. We have
$$Cost(y) + 2 \cdot LP(y) \leq Cost(x_{max}) + 2 \cdot LP(x_{max}) \leq (1+\varepsilon) \cdot OPT.$$

Suppose that $y$ could not be included in $P$, then there exists a solution $y'$ in $P$ such that $|y'|_1 = |y|_1$ and $2 \cdot LP(y') + Cost(y') \leq 2 \cdot LP(y) + Cost(y) \leq (1+\varepsilon) \cdot OPT$, which contradicts the assumption that $|x_{max}|_1 = \max_{x \in P'} |x|_1$. Therefore, solution $y$ could be included in $P$.

Observe that for any solution $x$, if $|x|_1 = n$, then $LP(x)=0$. Thus, after expected time of at most $O(n^3)$, the population $P$ could include a solution $y$ such that $Cost(y) + 2 \cdot LP(y) \leq (1+\varepsilon) \cdot OPT$ and $LP(y) = 0$, which is a $(1+\varepsilon)$-approximate weighted vertex cover.

Overall, the expected time in which \popBased finds a $(1+\varepsilon)$-approximate weighted vertex cover, is bounded by $O(n\cdot 2^{\min \{n,2(1- \varepsilon)OPT \}} + n^3)$.
\end{proof}

\section{Conclusion}
\label{sec:Conclusion}
The minimum vertex cover problem is one of the classical NP-hard combinatorial optimization problems. In this paper, we have generalized previous results of Kratsch and Neumann~\cite{Frank2013VertexCover} for the unweighted minimum vertex cover problem to the weighted case where in addition weights on the nodes are given. Our investigations show that \gsemo efficiently computes a $2$-approximation as long as the value of an optimal solution is small. Furthermore, we have studied the algorithm \demo using the $\varepsilon$-dominance approach and shown that it reaches a $2$-approximation in expected polynomial time. Furthermore, we have generalized the results for \gsemo to $(1+\varepsilon)$-approximations and presented a population-based approach with a specific diversity mechanism that reaches an $(1+\varepsilon)$-approximation in expected time $O(n\cdot 2^{\min \{n,2(1- \varepsilon)OPT \}} + n^3)$.

\section*{Acknowledgements}
This research has been supported by Australian Research Council grants DP140103400 and DP160102401.

\end{document}